\documentclass[runningheads]{llncs}

\usepackage{amsmath,amsfonts,bm}

\def\1{\bm{1}}

\DeclareMathAlphabet{\mathsfit}{\encodingdefault}{\sfdefault}{m}{sl}
\SetMathAlphabet{\mathsfit}{bold}{\encodingdefault}{\sfdefault}{bx}{n}

\usepackage{multirow}
\usepackage{multicol}

\usepackage{eccv}

\usepackage{eccvabbrv}

\usepackage{enumitem}
\usepackage{tcolorbox}
\tcbuselibrary{skins,breakable}
\usepackage{xparse}

\usepackage{graphicx}
\usepackage{subcaption}
\usepackage{booktabs}
\usepackage{multirow}
\usepackage{float}
\usepackage{wrapfig}
\usepackage{tabularx}
\usepackage{siunitx}

\usepackage{amsmath}
\usepackage{amssymb}
\usepackage{mathtools}
\usepackage{bm}
\usepackage{bbm}

\usepackage{microtype}
\usepackage{xspace}
\usepackage{xcolor}
\usepackage{color}
\usepackage{url}

\usepackage{listings}

\usepackage[numbers]{natbib}

\usepackage{hyperref}
\usepackage[capitalize,noabbrev]{cleveref}

\usepackage{orcidlink}

\usepackage[accsupp]{axessibility}

\hypersetup{
  colorlinks=true,
  linkcolor=black,
  citecolor=black,
  urlcolor=black
}

\usepackage{mdframed}

\mdfdefinestyle{TopRuleGray}{
  skipabove=2pt,
  skipbelow=2pt,
  linewidth=0pt,
  topline=true,
  bottomline=false,
  leftline=false,
  rightline=false,
  linecolor=black!40,
  backgroundcolor=black!3,
  innerleftmargin=8pt,
  innerrightmargin=8pt,
  innertopmargin=8pt,
  innerbottommargin=8pt,
  roundcorner=2pt
}

\newmdenv[style=TopRuleGray]{AssumpBox}
\newmdenv[style=TopRuleGray,backgroundcolor=white]{TheoremBox}

\newtcolorbox{takeawaysbox}[1]{%
  enhanced,
  breakable,
  colframe=black,
  colback=black!5,
  boxrule=0.9pt,
  arc=3.5pt,
  left=6pt,
  right=6pt,
  top=8pt,
  bottom=6pt,
  title={#1},
  fonttitle=\bfseries,
  coltitle=white,
  colbacktitle=black,
  boxed title style={
    size=small,
    arc=3pt,
    left=6pt,
    right=6pt,
    top=2pt,
    bottom=2pt
  },
  attach boxed title to top left={xshift=8pt,yshift=-2mm},
}

\newcounter{boxassumption}

\newenvironment{assumptionbox}[1][]{%
  \refstepcounter{boxassumption}%
  \ifstrempty{#1}{%
    \takeawaysbox{Assumption~\theboxassumption}%
  }{%
    \takeawaysbox{Assumption~\theboxassumption~(#1)}%
  }%
}{%
  \endtakeawaysbox%
}

\newcounter{boxtheorem}

\newenvironment{theorembox}[1][]{%
  \refstepcounter{boxtheorem}%
  \ifstrempty{#1}{%
    \takeawaysbox{Theorem~\theboxtheorem}%
  }{%
    \takeawaysbox{Theorem~\theboxtheorem~(#1)}%
  }%
}{%
  \endtakeawaysbox%
}

\setlist[enumerate,1]{
  label=(A\arabic*),
  ref=(A\arabic*),
  leftmargin=2.2em,
  labelsep=0.6em,
  itemsep=0.25em,
  topsep=0.25em,
  font=\normalfont
}

\providecommand{\1}{\mathbf{1}}

\newcommand{\pcview}[2]{%
  \includegraphics[width=0.31\linewidth,height=2.2cm,keepaspectratio,#2]{#1}%
}

\newlength{\qualpanelpadlen}
\newcommand{\qualpanelpad}[1]{\setlength{\qualpanelpadlen}{#1}}

\newcommand{\qualpanel}[9]{%
  \begin{minipage}[t]{0.49\linewidth}
    \centering
    \fbox{%
      \begin{minipage}[t]{0.95\linewidth}
        \raggedright
        \vspace{0.3em}

        \noindent{\small\textbf{#1}}\hfill{\small\textbf{#2}}\par
        \vspace{0.35em}

        {\centering
        \begin{tabular}{@{}c c c@{}}
          #3 & #4 & #5 \\
          \scriptsize Front & \scriptsize Side & \scriptsize Top
        \end{tabular}\par
        }

        \vspace{0.35em}

        {\scriptsize
        \textbf{Q:}~#6 \par
        \vspace{0.1em}
        \textbf{MPM:}~#7 \par
        \vspace{0.1em}
        \textbf{Qwen3-Omni (3-view):}~#8 \par
        \vspace{0.1em}
        \textbf{GT:}~#9 \par
        }

        \vspace{0.35em}
        \vspace{\qualpanelpadlen}%
      \end{minipage}%
    }%
  \end{minipage}%
  \global\setlength{\qualpanelpadlen}{0pt}%
}

\makeatletter
\newcommand{\beginsupplement}{%
  \setcounter{table}{0}%
  \renewcommand{\thetable}{A\arabic{table}}%
  \setcounter{figure}{0}%
  \renewcommand{\thefigure}{A\arabic{figure}}%
  \setcounter{equation}{0}%
  \renewcommand{\theequation}{A\arabic{equation}}%
  \@ifundefined{c@algorithm}{}{%
    \setcounter{algorithm}{0}%
    \renewcommand{\thealgorithm}{A\arabic{algorithm}}%
  }%
}
\makeatother

\begin{document}


\title{Multimodal LLMs under Pairwise Modalities}

\titlerunning{Multimodal LLMs under Pairwise Modalities}

\author{
Yan Li\inst{1}\textsuperscript{*} \and
Yunlong Deng\inst{1}\textsuperscript{*} \and
Yuewen Sun\inst{1,2} \and
Gongxu Luo\inst{1}\\[0.3em]
Kun Zhang\inst{1,2} \and
Guangyi Chen\inst{1,2}
}

\authorrunning{Y. Li et al.}

\institute{
Mohamed bin Zayed University of Artificial Intelligence
\and
Carnegie Mellon University
}

\maketitle


\begingroup
\renewcommand\thefootnote{*}
\footnotetext{Equal contribution.}
\endgroup

\begingroup
\renewcommand\thefootnote{}
\footnotetext{\hspace{-1em} {Preprint}}
\endgroup


\begin{abstract}

Despite the impressive results achieved by multimodal large language models (MLLMs), their training typically relies on jointly curated multimodal data, requiring substantial human effort to construct multi-way aligned datasets and thereby limiting scalability across domains. 
In this work, we explore training MLLMs by only leveraging multiple paired modalities as a surrogate for the full joint multimodal distribution.
Specifically, we first provide a theoretical analysis of the conditions under which the representations are identifiable with only observing pairwise modalities. Building on this analysis, we propose a representation learning framework for aligning latent representations across modalities using only pairwise data. The framework
consists of two stages: latent representation alignment and cross-modal recomposition. Specifically, in the first stage, we learn the shared latent space across modalities by both self-modal reconstruction and pair-wise contrastive learning. We also incorporate an inductive bias in the contrastive learning process by partially aligning and minimal latent specification. In stage two, we integrate the encoder of newly introduced modalities with the decoders of the pre-trained modalities to facilitate cross-modal transfer and generation.
We evaluate our method by newly adding 3D point clouds and tactile modalities into pre-trained MLLMs with three modality pairs and show that, by learning an aligned latent representation space, our model achieves strong cross-modal performance.
\end{abstract}


\section{Introduction}

Multimodal large language models (MLLMs) have recently demonstrated remarkable capabilities in cross-modal understanding and generation, enabling applications such as image-grounded dialogue~\cite{llava,llava-ov,emu3,srf}, audio-visual reasoning~\cite{xu2025qwen2,xu2025qwen3omni}, cross-modal interactions~\cite{yang2022lavt,carion2025sam}, and embodied perception~\cite{rt2,kim2024openvla}. By jointly modeling multiple modalities, e.g., text, images, audio, and 3D signals, these MLLMs learn shared representations that support flexible modality transfer and cross-modal generation. However, their success has largely relied on large-scale {jointly curated} multimodal datasets, in which all modalities are aligned for each instance~\cite{emu3,xu2025qwen3omni}. Constructing such multi-way aligned data is expensive, labor-intensive, and often infeasible in emerging domains such as robotics, tactile sensing, and medical imaging. This heavy dependence on fully aligned multimodal corpora significantly limits scalability to more modalities. 

As shown in \cref{fig:introduction}, compared with fully jointly aligned multimodal data, pairwise modality alignments are far more readily available in real-world data sources, making them a more practical supervision signal for scalable multimodal learning. For example, Image–text pairs are abundant on the web, 3D–image pairs can be obtained from multi-view capture systems, and tactile–vision pairs can be collected using instrumented robotic platforms equipped with touch sensors. This observation raises a fundamental yet unresolved question:
\begin{center}
	{ \textit{How can we train a multimodal model using only paired modalities,} } \\
	{ \textit{without ever observing the full joint distribution?} }
\end{center}
In this work, we explore multimodal representation learning under \emph{pairwise modalities} as a surrogate for full multi-way supervision. We begin by theoretically analyzing the conditions under which latent representations are identifiable from a collection of overlapping modality pairs in the generative model setting. Under mild identifiability assumptions, we show that learning shared representations from pairwise modalities is feasible and admits theoretical guaranties. Intuitively, if the modality graph formed by pairwise datasets is sufficiently connected, shared latent factors can be consistently recovered even without explicit multi-way alignment.

\begin{figure}[t]
    \centering
    \includegraphics[width=0.95\linewidth]{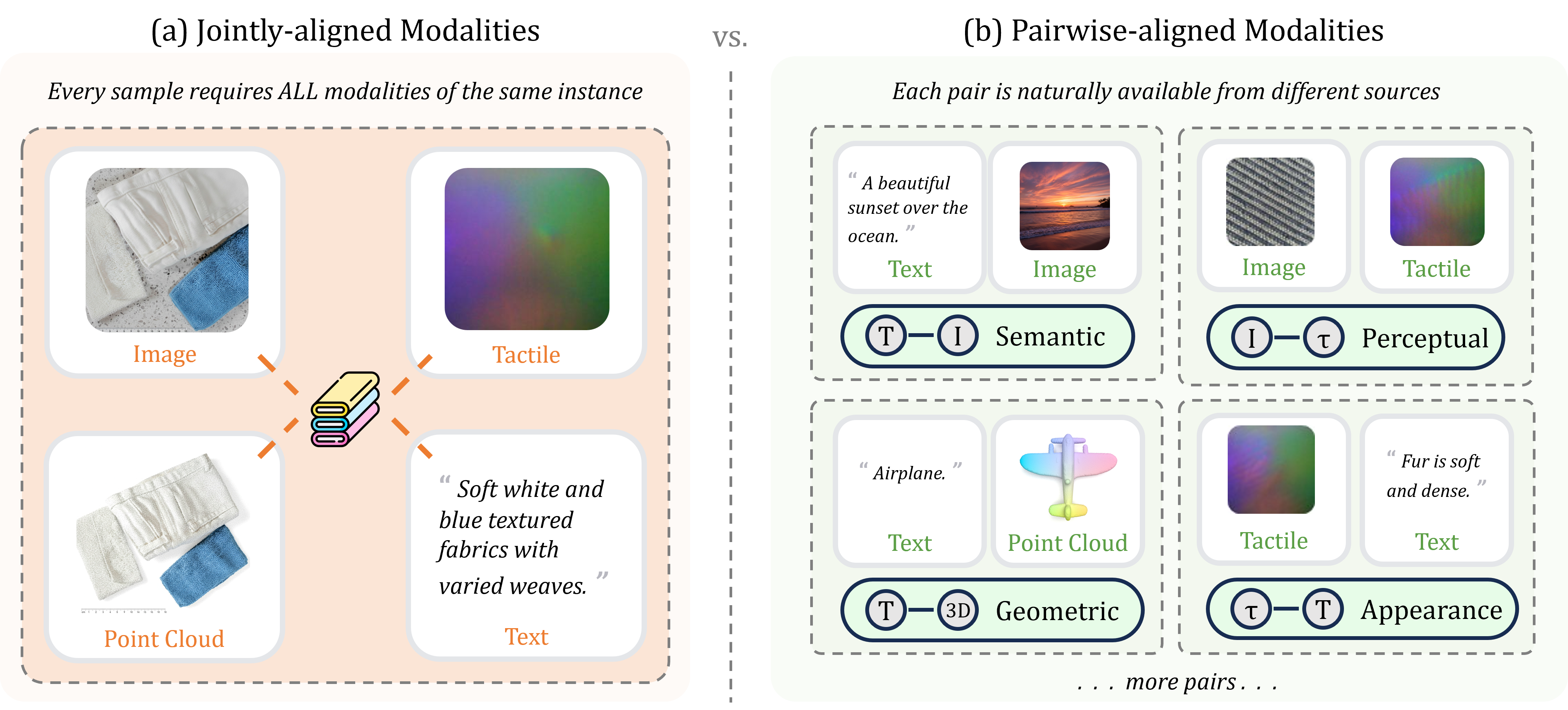}
    \caption{\textbf{Comparison between jointly-aligned and pairwise-aligned multimodal data}. 
(a) Jointly-aligned modalities: each sample requires all modalities, which is expensive and difficult to scale. 
(b) Pairwise-aligned modalities: modality pairs (e.g., text–image, image–tactile, text–3D) are naturally available from different sources, forming a connected modality graph without requiring fully aligned multi-way data.}
    \label{fig:introduction}
    \vspace{-15pt}
\end{figure}

Motivated by this analysis, we introduce a representation learning framework for aligning latent representations across modalities using only pairwise supervision. The framework is structured into two stages: first, learning a unified shared latent space; and second, facilitating cross-modality transfer via representation recomposition. Specifically, in stage one, shared representations are learned by combining self-modal reconstruction with cross-modal latent alignment. The former preserves modality-specific semantics and stabilizes training, whereas the latter employs a contrastive objective to encourage latent factors from paired modalities to align within a unified representation space. Within the contrastive learning framework, we incorporate structural inductive biases derived from the underlying data-generating process by enforcing partial representation alignment and promoting minimal latent specification. Then, in the second stage, we perform cross-modal recomposition to integrate newly introduced modalities into pre-trained large language models (LLMs) or MLLMs. For example, to incorporate a new modality such as 3D point clouds into an LLM, we first extract the shared representations between 3D and text with a point cloud encoder aligned with the textual latent space. We then recombine the aligned encoder with the text decoder, enabling text generation conditioned on 3D inputs. This recompositional design allows the new modality to leverage the generative capacity of pre-trained language models without full joint retraining. Unlike existing cross-modal adapters that require instruction tuning~\cite{llava} or LoRA-based fine-tuning~\cite{lora} to modify pre-trained models, our approach keeps the pre-trained models unchanged, thereby avoiding catastrophic forgetting and preserving their original capabilities.

We validate our approach by incorporating two new modalities, such as 3D point clouds and tactile sensing, into a pre-trained MLLM. Specifically, we adopt Qwen3-Omni~\cite{xu2025qwen3omni}, which is pre-trained on image, text, video, and audio modalities, as the base MLLM. To train and evaluate the proposed framework, we leverage publicly available datasets Objaverse~\cite{objaverse} and TVL~\cite{tvl}, comprising three new modality pairs: text--3D, text--tactile, and image--tactile. We compare our method against previous state-of-the-art MLLMs as well as adapter-based baselines. The results demonstrate that, when properly structured, pairwise modality supervision provides a scalable and effective alternative to conventional fully jointly aligned multimodal training.

In summary, the main contributions of this paper are threefold:
\begin{itemize}[topsep=2pt,itemsep=2pt,parsep=0pt,partopsep=0pt]
    \item We provide a theoretical analysis on the identifiability of multimodal representation from pairwise modality data.
    \item We propose a unified latent alignment and recomposition framework for scalable multimodal learning using only pairwise supervision.
    \item We conduct experiments with two newly introduced modalities and demonstrate strong cross-modal transfer despite relying solely on limited modality-pair datasets.
\end{itemize}

\section{Related Work}
\label{sec:related_work}

\noindent\textbf{Multimodal Representation Learning.}
Foundation scale pretraining made representation alignment a central design pattern. Dual encoder contrastive learning, exemplified by CLIP~\cite{radford2021learning} and ALIGN~\cite{jia2021scaling}, aligns image and text embeddings with large scale paired supervision, enabling strong zero shot transfer and retrieval via similarity in a shared space. Later refinements improved objectives and scaling strategies, including locked image text tuning, sigmoid based contrastive losses, and hybrid encoder decoder formulations that unify contrastive alignment with captioning~\cite{zhai2022lit,zhai2023sigmoid,yu2022coca}. Complementary to dual encoders, fusion based vision language pretraining integrates modalities inside a transformer using cross attention, with representative instances such as ViLBERT, LXMERT, UNITER, ViLT, and ALBEF~\cite{lu2019vilbert,tan2019lxmert,chen2020uniter,kim2021vilt,li2021align}. Unified task formulations further connect representation learning with generation; for example, OFA, FLAVA, BLIP, and BLIP 2~\cite{wang2022ofa,singh2022flava,li2022blip,li2023blip}.
Beyond image text, alignment has been extended to video and audio through video text pretraining and multimodal self-supervised learning, as well as audio text contrastive learning~\cite{xu2021videoclip,akbari2021vatt,elizalde2023clap}. Recent work aims to bind multiple modalities into a single latent space, including ImageBind and language centered binding via language as a hub, such as LanguageBind and OneLLM~\cite{girdhar2023imagebind,zhu2023languagebind,han2024onellm}. Closely related to our motivation, unpaired multimodal representation learning reduces the need for fully aligned multimodal tuples by connecting or extending existing contrastive representation spaces, as in C-MCR~\cite{wang2023connecting} and Ex-MCR~\cite{zhang2024extending}. Another relevant line is missing-modality learning, which studies inference or training when only partial modalities are observed, including multimodal VAEs, SMIL, and shared-specific feature modeling~\cite{wu2018multimodal,ma2021smil,wang2023shaspec}. Our work follows the spirit of representation alignment but emphasizes explicit latent space alignment through a chain of paired datasets, targeting improved efficiency and a principled mechanism for modality extension. While our setting can be viewed as a structured special case of missing-modality learning, it differs in that we infer joint representations from sparsely connected observations rather than assuming that a shared embedding learned from partial modalities directly transfers to missing-modality cases.

\vspace{0.1cm}
\noindent\textbf{Causal Representation Learning.}
Causality formalizes how latent mechanisms generate observations and how interventions affect outcomes, commonly described by structural causal models and related graphical criteria~\cite{pearl2009causality,spirtes2000causation,peters2017elements}. Causal representation learning aims to discover representations corresponding to high level causal variables that support robust generalization and modular recombination across domains~\cite{scholkopf2021toward}. A core difficulty is identifiability: how can the estimated latent variables be disentangled and correspond to the ground truth latent variables in block level or one-to-one mapping, especially in nonlinear cases~\cite{locatello2019challenging}. Recent progress establishes identifiability by leveraging sufficient changes~\cite{yao2022temporally,zhang2024causal}, distribution shifts~\cite{kong2022partial,li2023subspace}, multi view learning~\cite{yao2023multi,sun2024causal}, contrastive learning~\cite{zimmermann2021contrastive,hyvarinen2019nonlinear}, supervised learning~\cite{von2021self,halva2020hidden}, or explicit structural constraints like sparsity~\cite{lachapelle2022partial,zheng2022identifiability}. In language models, causal tools have been used to connect internal representations to behavior, spanning causal mediation analysis and causal abstraction. Recent surveys advocate integrating causality throughout pretraining, fine-tuning, alignment, inference, and evaluation~\cite{wu2024causality}. Our approach aligns with this agenda by using interlocking paired datasets across modalities to identify a shared latent space that remains consistent when incorporating new modalities.

\vspace{0.1cm}
\noindent\textbf{Multimodal Large Language Models.}
Multimodal large language models extend transformer-based language models with modality encoders and connectors and are typically trained via staged pretraining, alignment, and instruction tuning~\cite{vaswani2017attention,brown2020language}. Early strong results include Flamingo for few shot visual language adaptation and PaLM E for embodied multimodal reasoning~\cite{alayrac2022flamingo,driess2023palm}. Closed-source systems such as GPT 4 and Gemini demonstrate strong multimodal capabilities at scale~\cite{achiam2023gpt,team2023gemini}. Open and instruction tuned MLLMs have rapidly progressed, including Kosmos 1, InstructBLIP, LLaVA,  MiniGPT 4, and parameter-efficient visual instruction adaptation~\cite{huang2023language,dai2023instructblip,liu2023visual,zhu2023minigpt,gao2023llama,cai2024vip}. Scaling alignment with stronger vision foundations is explored in InternVL~\cite{chen2024internvl}, and any-to-any generation systems, such as Next-GPT, connect an LLM to multiple modality decoders~\cite{wu2024next}.
Within open source ecosystems, the Qwen series has released multiple multimodal technical reports, including Qwen2-VL, Qwen2-Audio, Qwen2.5-Omni, Qwen3-VL, and Qwen3-Omni~\cite{wang2024qwen2,chu2024qwen2,xu2025qwen2,xu2025qwen3omni,bai2025qwen3vl}. Most modality extension approaches rely on large-scale joint datasets or extensive modality-specific finetuning. While some works~\cite{xu2025qwen3omni,xu2021videoclip} are LanguageBind-style approaches that use language as a shared anchor, our framework does not require any universal anchor modality, enabling more flexible updates for newly emerging modality pairs. We implement explicit representation-level latent alignment on top of Qwen3 Omni using a chain of paired datasets, enabling new modalities and cross-modal generation while preserving original multimodal abilities.

\section{Theory}
\label{sec: theory}

\subsection{Generative Model and Problem Setup}
We introduce the following generative model as a formal tool for studying identifiability: it specifies how modality-specific observations arise from shared and private latent factors, thereby allowing us to state when the shared latent subspace can be recovered from pairwise observations.

\begin{wrapfigure}{r}{0.58\linewidth}
\vspace{-20pt}
\centering
\includegraphics[width=\linewidth]{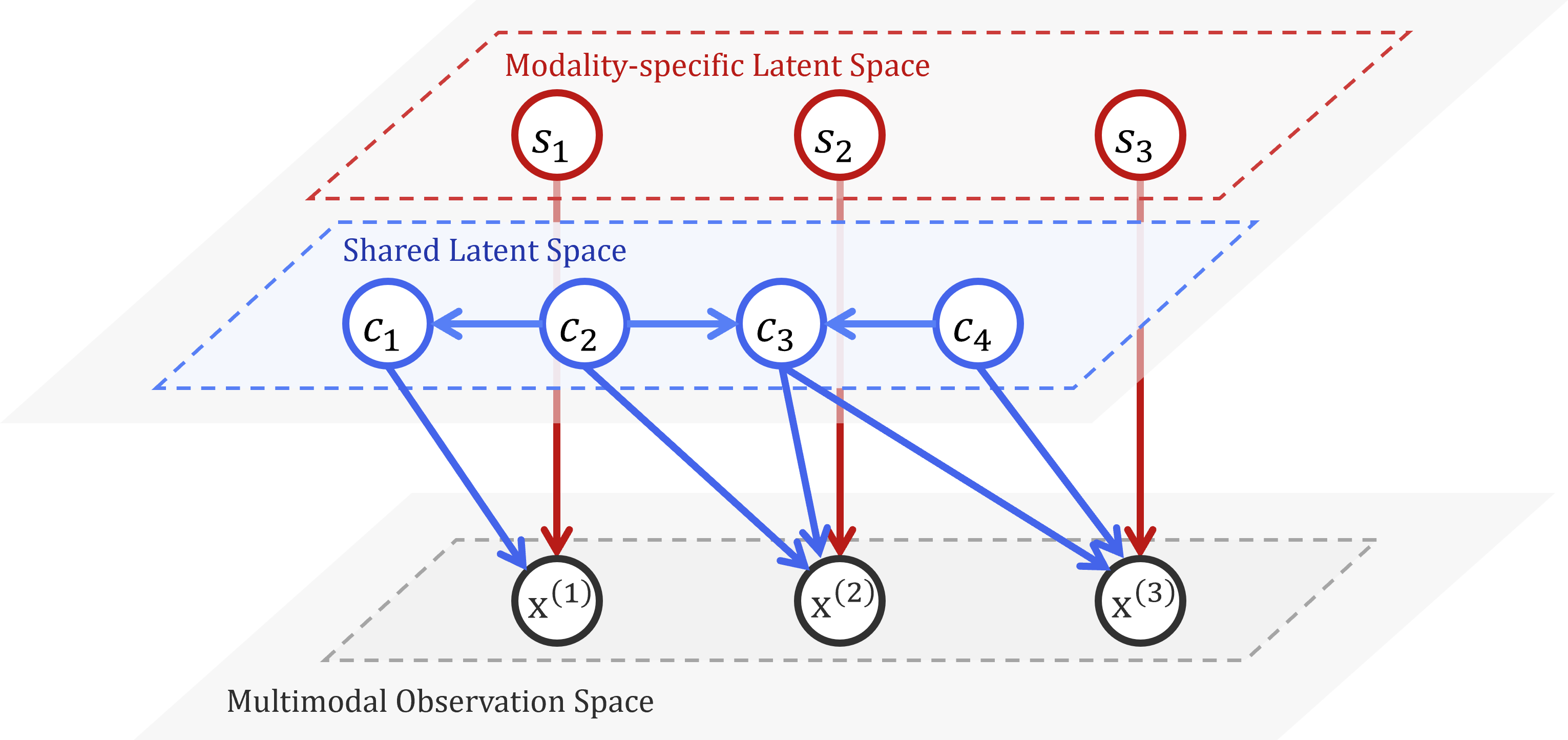}
\vspace{-8pt}
\caption{\textbf{Data generation process.} The shared latent space $\vec{c}$ (blue) contains factors that influence multiple modalities, while modality-specific variables $\vec{s}$ (red) affect only their corresponding modality. Each observation $\vec{x}^{(m)}$ is generated from its shared latent block $\vec{z}^{(m)}_c$ and modality-specific block $\vec{z}^{(m)}_s$.}
\label{fig:causal_map}
\vspace{-25pt}
\end{wrapfigure}
\subsubsection{Data Generating Process.}
We consider $\vec{x} := [\vec{x}^{(1)}, \dots, \vec{x}^{(M)}]$ to denote a collection of observations from $M$ distinct modalities, where each $\vec{x}^{(m)} \in \mathbb{R}^{d_x^{(m)}}$ represents the observation corresponding to modality $m$, and $d_x^{(m)}$ denotes its dimensionality. Let $\vec{z} := [\vec{z}^{(1)}, \dots, \vec{z}^{(M)}]$ denote the collection of latent variables associated with the $M$ modalities, which are assumed to be causally related and to jointly govern the data generating process of the observed variables. For each modality, the latent variables $\vec{z}^{(m)}= [\vec{z}^{(m)}_c, \vec{z}^{(m)}_s] \in \mathbb{R}^{d_z^{(m)}}$ consist of a {shared} latent block $\vec{z}^{(m)}_c$ and an independent {modality-specific} latent block $\vec{z}^{(m)}_s$. The shared latent variables $\vec{z}^{(m)}_c \in \mathbb{R}^{d_c^{(m)}}$ denote the subset of latent factors that are causally associated with latent variables from other modalities, or directly participate in the generation of multiple modalities. In contrast, the modality-specific latent variables $\vec{z}^{(m)}_s \in \mathbb{R}^{d_s^{(m)}}$ are exclusively related to modality $m$ and influence only its corresponding observations.

As shown in Figure~\ref{fig:causal_map}, the data generation process can be written as:
\begin{equation}
\label{eq:gen_single}
 \vec{z}^{(m)}_i = f_{m, i}(\text{Pa}(\vec{z}^{(m)}_i), \epsilon^{(m)}_i) ,\qquad \vec{x}^{(m)} = g_{m}(\vec{z}^{(m)}_c, \vec{z}^{(m)}_s),
\end{equation}
where the $f_{m, i}$ denotes the latent causal process of each latent variable, while the $g_m$ is the generative function from latent factors to the observation. $\text{Pa}(\cdot)$ denotes the parents of a variable, and $\epsilon^{(m)}_i \sim p_{\epsilon^{(m)}_i}$ is the noise term. 
Since certain shared latent factors may simultaneously appear in the generating processes of multiple modalities (e.g., ${c}_3$ in Figure~\ref{fig:causal_map}), the total number of distinct shared latent variables is generally smaller than $\sum_{m=1}^{M} |\vec{z}^{(m)}_c|$. To make it clearer, we define the set of all connected latent factors as $\vec{c}=\{c_1,c_2, \dots, c_{d_c}\}$, and we have $\vec{c}=\{\vec{z}^{(1)}_c \cup \vec{z}^{(2)}_c \cup ... \cup \vec{z}^{(M)}_c \}$. Similarly, we define the set of modality-specific latent variables as $\vec{s}=\{s_1,s_2, \dots, s_{d_s}\}$. 

\subsubsection{Pairwise Observations and Modeling.}
In practice, it is often challenging to collect fully jointly aligned multimodal data as described in Eq.~\ref{eq:gen_single}. Instead, it is considerably more feasible to obtain observations that are aligned only in pairwise modalities. 
To formally characterize such pairwise co-occurrence, we introduce an undirected graph $\mathcal{G} = ([M], \mathcal{E})$, where the node set $[M] := \{1, \dots, M\}$ indexes the modalities, and the edge set $\mathcal{E} \subseteq \bigl\{ \{i,j\} : 1 \le i < j \le M \bigr\}$
encodes the availability of pairwise observations. An edge $\{i,j\} \in \mathcal{E}$ indicates that paired data between modalities $i$ and $j$ are observed.
For a fixed target modality $i \in [M]$, we define its set of observed neighbors as $
\mathcal{N}(i) := \{\, j \in [M] \setminus \{i\} \;:\; \{i,j\} \in \mathcal{E} \,\} $
which represents the collection of modalities for which pairwise aligned data with modality $i$ are available. Accordingly, the available dataset consists of pairwise observations $
\mathcal{D} = \bigl\{ (\vec{x}^{(i)}, \vec{x}^{(j)}) \;:\; \{i,j\} \in \mathcal{E} \bigr\} $, that is, for each edge $\{i,j\} \in \mathcal{E}$ in the modality graph, we observe samples jointly drawn from modalities $i$ and $j$.

Given one pairwise observations $(\vec{x}^{(i)}, \vec{x}^{(j)} )$, and taking modality $i$ as target, we assume that the paired data between modality $i$ and any neighbor $j \in \mathcal{N}(i)$ admits the following form
\begin{equation}
\label{eq:pair_gen}
\vec{x}^{(i)} = g_i\bigl(\vec{z}^{(i)}_c, \vec{z}^{(i)}_s\bigr),
\qquad
\vec{x}^{(j)} = g_{j\leftarrow i}\bigl(\vec{z}^{(i)}_c, \tilde{\vec{z}}^{(j)}_{\setminus i}\bigr),
\end{equation}
where the former denotes the standard generating process, $\vec{z}^{(i)}_c$ denotes all latent variables of modality $i$ that are relevant to other modalities, and $\tilde{\vec{z}}^{(j)}_{\setminus i}$ represents the latent factors of modality $j$ that are needed to generate $\vec{x}^{(j)}$ expect $\vec{z}^{(i)}_c$. As illustrated in Figure~\ref{fig:causal_map}, when $i = 2$ and $j = 3$, the shared latent variables are given by $\vec{z}^{(i)}_c = \{c_2, c_3\}$, while $\tilde{\vec{z}}^{(j)}_{\setminus i}$ corresponds to the set $\{c_4, s_3\}$, which includes the latent factors of modality $j=3$ that are not shared with modality $i=2$.

\subsubsection{Identifiability.} 
Given the data $\mathcal{D}$, our goal is to learn latent variables $\vec{z}^{(m)}$ underlying each modality. Formally, consider the ground truth 
data-generating process in ~\cref{eq:gen_single}, whose parameters are $\Bigl\{ g_m,\; \{ f_{m,i} \}_{i=1}^{d_z^{(m)}},\; \{ p(\epsilon^{(m)}_i) \}_{i=1}^{d_z^{(m)}} \Bigr\}_{m=1}^{M}$, and the estimated parameters $\Bigl\{ \hat{g}_m,\; \{ \hat{f}_{m,i} \}_{i=1}^{d_z^{(m)}},\; \{ \hat{p}(\epsilon^{(m)}_i) \}_{i=1}^{d_z^{(m)}} \Bigr\}_{m=1}^{M}$. Our subspace identification objective is to show that, with the same data, the estimated shared latent representations $\hat{z}_{c}^{(m)}$ based on estimated parameters should be equivalent to the ground truth counterpart ${z}_c^{(m)}$
as $\hat{\vec{z}}^{(m)}_c = h^{(m)} \vec{z}^{(m)}_c$, where $h^{(m)}$ is an invertible mapping. 

This notion of \emph{subspace identifiability} guarantees that, although individual latent components may not be uniquely identifiable, the shared latent subspace responsible for cross-modal dependence is recoverable from pairwise observations. Such identifiability is sufficient to guarantee consistent cross-modal alignment and reliable transfer across modalities, thereby providing a principled foundation for real-world multimodal learning applications.

\subsubsection{Example (Pairwise Multimodal Setting).}
Consider three modalities in Fig.~\ref{fig:causal_map}: image ($m=1$), tactile sensing  ($m=2$), and text ($m=3$), with observations $\vec{x}^{(1)}, \vec{x}^{(2)}, \vec{x}^{(3)}$. The latent variables $\vec{z}=[z^{(1)}, z^{(2)}, z^{(3)}] = [[c_1, s_1], [c_2, c_3, s_2],
[c_3, \\c_4,s_3]]$ capture underlying semantic factors, where the shared block $c = \{c_1, c_2, c_3, \\c_4\}$ represents object-level properties that influence multiple modalities, while the modality-specific block $s = \{s_1, s_2, s_3\}$ accounts for modality-dependent variations such as sensor noise. In practice, we may only observe pairwise data, such as image–text, text–tactile, or image–tactile pairs, without fully aligned triplets. Nevertheless, because the shared latent factors propagate across the connected modality graph, the common semantic subspace can still be recovered from overlapping pairwise observations.

\subsection{Block Identifiability}

\subsubsection{Notation.} 
Here, we introduce several notations that will be used in the subsequent theoretical analysis. 
For a vector $\mathbf{v} \in \mathbb{R}^D$, we denote by $[\mathbf{v}]_k$ its $k$-th coordinate and by $[\mathbf{v}]_I$ the subvector indexed by $I \subseteq [D] := \{1,\dots,D\}$. 
For a vector-valued function $\mathbf{f}$, $[\mathbf{f}]_k$ denotes its $k$-th output component. 
Let $J_f$ denote the Jacobian matrix of a function $f$ with respect to its concatenated input variables. 
We use $I(\vec{z}^{(m)}_c)$ to denote the set of coordinate indices corresponding to the shared latent block $\vec{z}^{(m)}_c$ within the full latent vector.
Accordingly, the partial Jacobian of $g_{j\leftarrow i}\bigl(\vec{z}^{(i)}_c, \tilde{\vec{z}}^{(j)}_{\setminus i}\bigr)$ (cf.~\cref{eq:pair_gen}), which captures the sensitivity of $\vec{x}^{(j)}$ with respect to $\vec{z}^{(i)}_c$, can be expressed as
\[
A_{j\leftarrow i}(\vec{z}^{(i)}_c, \tilde{\vec{z}}^{(j)}_{\setminus i})
:=
\bigl[J_{g_{j\leftarrow i}\bigl(\vec{z}^{(i)}_c, \tilde{\vec{z}}^{(j)}_{\setminus i}\bigr)}\bigr]_{:,\,I(\vec{z}^{(m)}_c)}
\in\mathbb{R}^{d_{x}^{(j)}\times d_{c}^{(i)}}.
\]

\begin{assumptionbox}[Subspace Identifiability Conditions]
\label{subspace_ass}
\begin{enumerate}
\item \label{subspace_ass:A1} \textbf{Smoothness \& Invertibility:}
The generative function of single modality $g_i$ and the joint one $g_{ij}:(\vec{z}^{(i)}_c, \vec{z}^{(i)}_s, \tilde{\vec{z}}^{(j)}_{\setminus i}) \rightarrow (\vec{x}^{(i)},\vec{x}^{(j)})$ are smooth and admit smooth inverse functions.
\item \label{subspace_ass:A2} \textbf{Linear Independence:}
For latent realization $(\vec{z}^{(i)}_c, \tilde{\vec{z}}^{(j)}_{\setminus i}), j \in \mathcal{N}(i)$, there exist matrices
\(
L_{ij}(\vec{z}^{(i)}_c, \tilde{\vec{z}}^{(j)}_{\setminus i})\in\mathbb{R}^{d^{(i)}_{c}\times d^{(j)}_{x}},
\) such that the following coefficient combination identity holds:
\[
\sum_{j\in\mathcal{N}(i)} L_{ij}(\vec{z}^{(i)}_c, \tilde{\vec{z}}^{(j)}_{\setminus i})\;A_{j\leftarrow i}(\vec{z}^{(i)}_c, \tilde{\vec{z}}^{(j)}_{\setminus i})
=
I(\vec{z}^{(i)}_c).
\]

\end{enumerate}
\end{assumptionbox}
\subsubsection{Remark 1.} Assumption~\ref{subspace_ass}\ref{subspace_ass:A1} means that each observation contains enough information about its underlying latent factors so that one can, in principle, recover a latent representation from the data, with only a change of coordinates ambiguity.This premise is common in nonlinear latent variable identifiability~\cite{kong2022partial,yao2023multi,sun2024causal,zimmermann2021contrastive}.
Assumption~\ref{subspace_ass}\ref{subspace_ass:A2} captures the idea that the shared factors influence the neighboring modalities in a sufficiently rich way so that those neighbors provide informative constraints for recovering the shared part. Importantly, it does not require any observed pair to be fully informative; one neighbor may reveal only certain directions of the shared factors, while another may reveal complementary directions. Instead, it requires that, when information is aggregated across all observed neighbors, the resulting constraints span all directions of the shared latent block.

\begin{theorembox}[Subspace Identifiability with Pairwise Modalities]
\label{thm:multi_pair_ident}
With pairwise observations, a fixed target modality $i$ and its observed neighbors $\mathcal{N}(i)$, if the parameters $\{ g_i,\ \{g_{j\leftarrow i}\}_{j\in\mathcal{N}(i)}\}$ of the pairwise data-generating processes for $(\vec{z}^{(i)}_c, \vec{z}^{(i)}_s)$ satisfy Assumption~\ref{subspace_ass}\ref{subspace_ass:A1} and \ref{subspace_ass:A2}, the estimated ones $\{\, \hat g_i,\ \{\hat g_{j\leftarrow i}\}_{j\in\mathcal{N}(i)}\}$ for $(\hat{\vec{z}}^{(i)}_c,\hat{\vec{z}}^{(i)}_s)$  satisfy Assumption~\ref{subspace_ass}\ref{subspace_ass:A1}, and generative models induce identical pairwise observations for all $j\in\mathcal{N}(i)$,
then the shared latent block $z_c^{(i)}$ can be identified.
\end{theorembox}

\subsubsection{Proof Intuition.}
We compare two specifications that induce the same pairwise distributions for all observed neighbors. By Assumption~\ref{subspace_ass}\ref{subspace_ass:A1}, each observation can serve as a local coordinate chart for its latent variables, so we can track how small changes in the shared coordinates propagate to each observation of neighbors. Since the pairwise distributions match, these local input-output change patterns must also match up to a smooth change of coordinates. The key step is then to aggregate the neighbor-wise constraints using Assumption~\ref{subspace_ass}\ref{subspace_ass:A2}: combining the constraints across multiple neighbors forces the recovered shared representation to be related to the true shared block by a smooth invertible transformation, establishing block identifiability even when no single pair is sufficient on its own. A detailed proof of block identifiability, along with additional results on component identifiability, is provided in Appendix A.

\section{Approach}

In this section, we present a framework, {M}ultimodal {P}airwise {M}odel( MPM in short), for multimodal representation learning using only pairwise modality supervision. This framework is motivated by our theoretical insight that, under appropriate connectivity, shared latent representations can be recovered from overlapping modality pairs without ever observing fully jointly aligned samples. 

\subsection{Overall Framework}

Formally, as shown in the theoretical setup, suppose that we are given a set of modalities $\mathcal{M} = \{1, \dots, M\}$. Instead of observing samples from the full joint distribution, we only have access to datasets of modality pairs
\begin{equation}
\label{eq:pair_dataset}
\mathcal{D}_{ij} = \{(x^{(i)}_n, x^{(j)}_n)\}_{n=1}^{N_{ij}},
\qquad (i,j) \in \mathcal{E},
\end{equation}
where $\mathcal{E}$ denotes the set of observed modality pairs as defined in \cref{sec: theory}, and $N_{ij}$ denotes the number of data pairs in modality pair $\vec{x}^{(i)},\vec{x}^{(j)} $.
Our goal is to learn the latent representation $\vec{z}^{(m)}= [\vec{z}^{(m)}_c, \vec{z}^{(m)}_s]$ of each modality $\vec{x}^{(m)}$, and most importantly, to find the shared latent variables $\vec{z}^{(m)}_c$, which are semantically aligned across modalities, enabling cross-modal transfer and generation.

As shown in Figure~\ref{fig:method}, our framework consists of two stages:
\begin{enumerate}[label={},topsep=2pt,itemsep=2pt,parsep=0pt,partopsep=0pt]
    \item \textbf{Stage I: Latent Representation Alignment.} Learn modality encoders and decoders that map inputs into a unified shared latent space using only pairwise supervision.
    \item \textbf{Stage II: Cross-Modal Recomposition.} Recompose the aligned encoders of newly introduced modalities with the decoders of existing modalities in pre-trained LLMs or MLLMs, thereby enabling cross-modal generation without modifying the backbone model.
\end{enumerate}

\begin{figure}[t]
\centering
\includegraphics[width=0.95\linewidth]{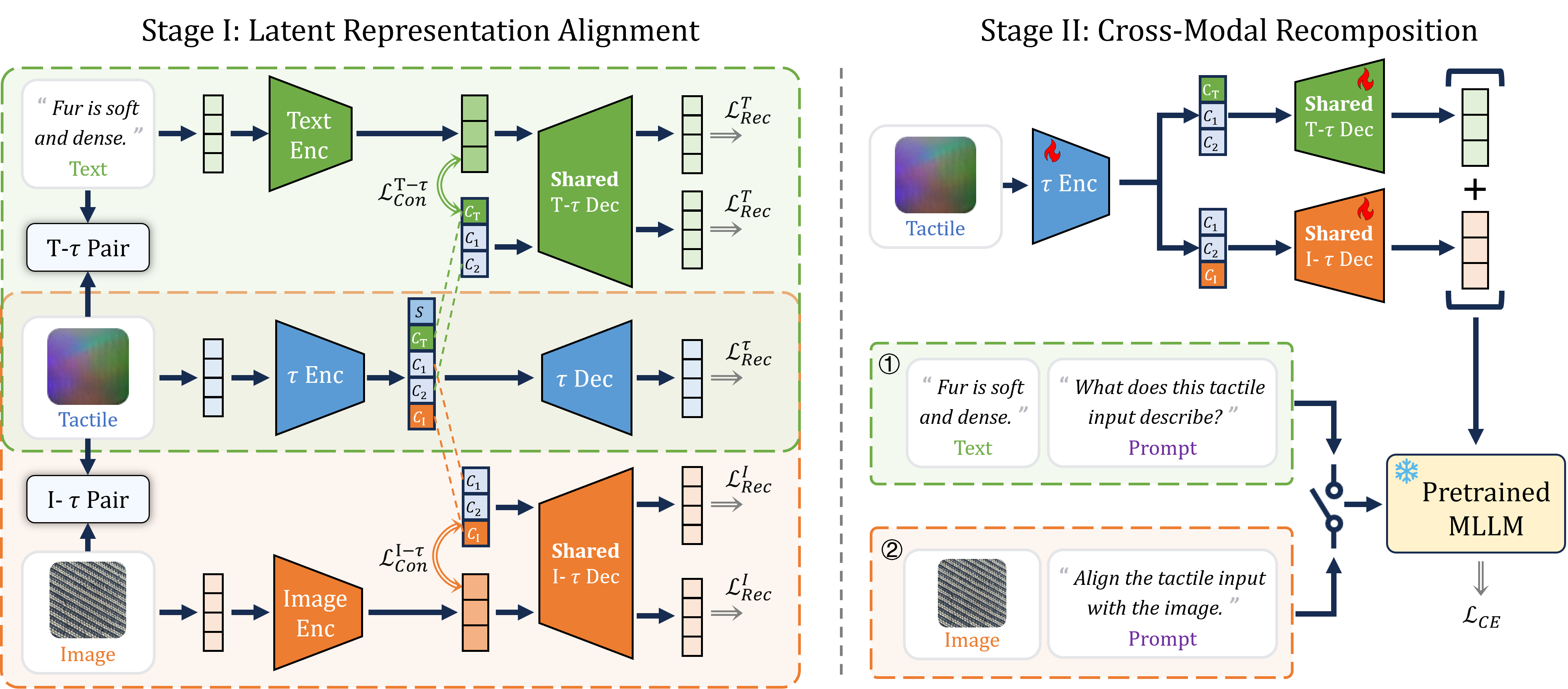}
\caption{\textbf{Two-stage framework for multimodal learning from pairwise-only supervision.}
Stage I (Left) aligns modality encoders in a shared latent space; and Stage II (Right) recomposes them with a frozen pretrained MLLM to enable cross-modal generation without multi-way alignment.}
\label{fig:method}
\vspace{-0.4cm}
\end{figure}

\subsection{Stage I: Latent Representation Alignment}

\subsubsection{Self-modal Reconstruction.}
We begin our approach with a generative framework that reconstructs each modality from its learned latent representation, implemented through an autoencoder architecture.
For each modality $\vec{x}^{(m)}$, we first learn an encoder, such as a Transformer~\cite{vaswani2017attention} or a lightweight multilayer perceptron (MLP), that maps the observed data into latent space: 
\begin{equation}
\label{eq:encoder_def}
\text{Enc}_m:\mathbb{R}^{d_x^{(m)}} \rightarrow 
\mathbb{R}^{d_c^{(m)}} \times \mathbb{R}^{d_s^{(m)}},
\qquad
(\hat{\vec{z}}^{(m)}_c,\hat{\vec{z}}^{(m)}_s)
=
\text{Enc}_m(\vec{x}^{(m)}).
\end{equation}
which mirrors the theoretical decomposition $\vec{z}^{(m)}=[\vec{z}^{(m)}_c,\vec{z}^{(m)}_s]$.
The shared code $\hat{\vec{z}}^{(m)}_c$ is intended to parameterize the modality-relevant subset, while $\hat{\vec{z}}^{(m)}_s$ captures modality-specific variations.

To ensure that $(\hat{\vec{z}}^{(m)}_c,\hat{\vec{z}}^{(m)}_s)$ preserves all information about $\vec{x}^{(m)}$, we introduce a lightweight decoder: 
\begin{equation}
\label{eq:decoder_def}
\text{Dec}_m:\mathbb{R}^{d_c^{(m)}} \times \mathbb{R}^{d_s^{(m)}} 
\rightarrow \mathbb{R}^{d_x^{(m)}},
\qquad
\hat{\vec{x}}^{(m)} 
= 
\text{Dec}_m\bigl(\hat{\vec{z}}^{(m)}_c,\hat{\vec{z}}^{(m)}_s\bigr).
\end{equation}
Here, all latent variables are used to generate the observation. Then, we minimize
\begin{equation}
\label{eq:rec_loss}
\mathcal{L}^{(m)}_{\mathrm{rec}}
=
\mathbb{E}_{\vec{x}^{(m)}}\!\left[
\ell_m\!\left(
\hat{\vec{x}}^{(m)},\, \vec{x}^{(m)}
\right)
\right],
\end{equation}
where $\ell_m$ denotes a modality-specific reconstruction loss that measures the discrepancy between the reconstruction $\hat{\vec{x}}^{(m)}$ and the original input $\vec{x}^{(m)}$. In our implementation, we adopt a weighted combination of the mean squared error (MSE) and a cosine similarity term:
\begin{equation}
\label{eq:recon_loss_full}
\ell_m\bigl(\hat{\vec{x}}^{(m)}, \vec{x}^{(m)}\bigr)
=
\left\lVert \hat{\vec{x}}^{(m)} - \vec{x}^{(m)} \right\rVert_2^2
\;-\;
\lambda \,
\hat{\vec{x}}^{(m)\top}\vec{x}^{(m)}
/
(
\lVert \hat{\vec{x}}^{(m)}\rVert_2
\lVert \vec{x}^{(m)} \rVert_2
).
\end{equation}
where $\lambda$ controls the trade-off between the MSE and the cosine similarity.

\subsubsection{Cross-Modal Contrastive Alignment.}
To align the shared latent representations across modalities, we adopt a CLIP-style contrastive learning objective. Following the symmetric formulation of CLIP, we enforce bidirectional alignment between modalities $i$ and $j$. Given the shared representations $\hat{\vec{z}}^{(i)}_c$ and $\hat{\vec{z}}^{(j)}_c$, the standard contrastive loss from $i \rightarrow j$ is defined as
\begin{equation}
\mathcal{L}_{i \rightarrow j}
=-
\mathbb{E}
\left[
\log
\frac{
\exp\!\left(
\mathrm{sim}(\hat{\vec{z}}^{(i)}_c, \hat{\vec{z}}^{(j)}_c) / \tau
\right)
}{
\sum_{(\vec{x}^{(i)}, \vec{x}^{(j')})}
\exp\!\left(
\mathrm{sim}( \hat{\vec{z}}_c^{(i)}, \hat{\vec{z}}_c^{(j')}) / \tau
\right)
}
\right],
\end{equation}
where $\mathrm{sim}(\cdot,\cdot)$ denotes cosine similarity and $\tau$ is a temperature parameter. $(\vec{x}^{(i)},\vec{x}^{(j')})$ are negative pairs. The reverse direction $j \rightarrow i$ is defined analogously. 
The final alignment loss for the pair $\{i,j\}$ is
\begin{equation}
\label{eq:align_loss}
\mathcal{L}^{(ij)}_{\mathrm{con}}
=
\frac{1}{2}
\left(
\mathcal{L}_{i \rightarrow j}
+
\mathcal{L}_{j \rightarrow i}
\right).
\end{equation}
We combine self-modal reconstruction and cross-modal contrastive alignment:
\begin{equation}
\label{eq:stage1_loss}
\mathcal{L}^{(ij)}_{\mathrm{Stage\,I}}
=
\lambda_{\mathrm{rec}}
\left(
\mathcal{L}^{(i)}_{\mathrm{rec}}
+
\mathcal{L}^{(j)}_{\mathrm{rec}}
\right)
+
\lambda_{\mathrm{con}}
\mathcal{L}^{(ij)}_{\mathrm{con}},
\end{equation}
where $\lambda_{\mathrm{rec}}$ and $\lambda_{\mathrm{con}}$ balance reconstruction and alignment, respectively.

\subsubsection{Partial and Asymmetric Alignment.}

According to the pairwise generative form in~\cref{eq:pair_gen}, when treating modality $i$ as the target modality, the cross-modal dependence between $\vec{x}^{(i)}$ and $\vec{x}^{(j)}$ is mediated only by the subset of shared latent variables that are relevant to modality $i$. In other words, not all components of $\vec{z}^{(j)}_c$ necessarily contribute to the generation of $\vec{x}^{(i)}$. This observation is particularly important in our setting, where the ultimate objective is to integrate newly introduced modalities into pre-trained LLMs/MLLMs. We therefore treat modalities that already exist in the pre-training stage as {target modalities}. Due to modality gaps, certain latent factors in a newly added modality may not correspond to those in the target modality. For example, tactile sensing may capture fine-grained surface properties (e.g., roughness, stiffness, or friction) that are not explicitly described in textual captions, even though some of these properties are visually correlated with images. Enforcing full alignment between $\vec{z}^{(i)}_c$ and $\vec{z}^{(j)}_c$ may therefore introduce spurious constraints.

To address this issue, we propose an asymmetric and partial alignment mechanism from the newly added modality to the target modality. Specifically, we introduce a binary mask $\vec{m}^{(j\leftarrow i)} \in \{0,1\}^{d_c^{(j)}}$ to select the subset of shared latent components in $\vec{z}^{(j)}_c$ that are relevant to modality $i$. The similarity used in the contrastive objective is then defined as
\begin{equation}
\label{eq:masked_similarity}
\mathrm{sim}\bigl(\vec{z}_c^{(i)}, \vec{z}_c^{(j)}\bigr)
\;:=\;
\mathrm{sim}\bigl(
\vec{z}_c^{(i)},\;
\vec{m}^{(j\leftarrow i)} \odot \vec{z}_c^{(j)}
\bigr),
\end{equation}
where $\odot$ denotes element-wise multiplication. 
Thus, we feed the selected shared representation of the newly added modality into the decoder of the target modality,
$
\tilde{\vec{x}}^{(i)}
=
\mathrm{Dec}_{i}\!\left(
\vec{m}^{(j\leftarrow i)} \odot \vec{z}_c^{(j)}
\right)$
and impose a reconstruction constraint on $\tilde{\vec{x}}^{(i)}$. In our experiments, we assume that the representations learned by the pre-trained MLLMs have already captured all latent variables, including shared factors and specific factors. Therefore, we focus solely on modeling and aligning the shared representations when integrating newly introduced modalities. As illustrated in \cref{fig:method}, for simplicity, we predefine the number of active entries in the mask, and the mask $\vec{m}^{(j\leftarrow i)}$ is allowed to vary across different target modalities to reflect modality-dependent shared subspaces.

\subsection{Stage II: Cross-Modal Recomposition}

After learning modality-specific encoders $\{\text{Enc}_m\}_{m=1}^{M}$ and decoders $\{\text{Dec}_m\}_{m=1}^{M}$ in Stage I, we enable cross-modal generation by recomposing the learned shared representations with a frozen pre-trained LLM/MLLM. The key idea is to use the shared latent space as a bridge between newly introduced modalities and the modalities already supported by the pre-trained backbone.

Specifically, let modality $(i)$ denote a target modality involved in the pre-training of LLMs/MLLMs, and let modality $(j)$ be newly introduced. Given an input $\vec{x}^{(j)}$, we first obtain its shared representation via $\text{Enc}_j(\vec{x}^{(j)})$. We then apply the asymmetric mask $\vec{m}^{(j\leftarrow i)}$ to select the components relevant to modality $(i)$, and  feed the resulting representation into the decoder of modality $(i)$:
\begin{equation}
\label{eq:cross_modal_transfer}
\tilde{\vec{x}}^{(i)}
=
\mathrm{Dec}_i\!\left(
\vec{m}^{(j\leftarrow i)} 
\odot 
\mathrm{Enc}_j\!\left(\vec{x}^{(j)}\right)
\right).
\end{equation}
The transferred representation $\tilde{\vec{x}}^{(i)}$ is treated as a surrogate input in modality $(i)$ and is subsequently provided as context to the frozen pre-trained LLM/MLLM for downstream generation. 
During this stage, the parameters of the backbone model remain fixed, while only the modality encoders and decoders are updated. 
This design preserves the original capabilities of the pre-trained model and avoids catastrophic forgetting.

\vspace{0.2cm}
\noindent\textbf{One Modality with Multiple Pairs.}
In practice, a modality may participate in multiple pairwise datasets. 
To accommodate this setting, we jointly transfer the modality to all its corresponding target modalities and aggregate the resulting outputs as contextual inputs to the pre-trained MLLM.
As illustrated in Figure~\ref{fig:method}, since fully jointly aligned multimodal data are unavailable, training proceeds iteratively over individual modality pairs. 
In each iteration, we sample one observed pair and update the corresponding alignment components, while keeping all decoders active. 
Although a specific modality may not appear in the current mini-batch, its decoder remains trainable due to the shared latent space established in Stage I, which ensures global consistency across modalities.





\section{Experiments}

\begin{table}[t]
\caption{\textbf{Performance comparison with baselines on two 3D point benchmarks: 3D MM-VET and ModelNet40.}
``Tuning'' indicates whether the model is fully fine-tuned (Full) or adapted via parameter-efficient fine-tuning (PEFT); when available, we report the number of updated parameters in parentheses. Higher is better. All scores are accuracy (\%).
\textsuperscript{\textdagger} Results marked with a dagger are taken from the original PointLLM and ShapeLLM papers.}
\label{tab:mpm_3d_benchmarks}
\centering
\small
\setlength{\tabcolsep}{4pt}
\renewcommand{\arraystretch}{0.88}

\sisetup{
  detect-weight = true,
  detect-inline-weight = math,
  table-number-alignment = center
}

\newcolumntype{Y}{>{\centering\arraybackslash}X}

\begin{tabularx}{\linewidth}{@{} Y c Y S[table-format=2.1] S[table-format=2.1] @{}}
\toprule
\textbf{Method} & \textbf{Tuning} & \textbf{Input} & {\textbf{3D MM-VET}} & {\textbf{ModelNet40}} \\
\midrule
Qwen3-Omni-30B-A3B & -- & 3-view 2D image & 56.1 & 24.2 \\
\midrule
PointBind\&LLM\textsuperscript{\textdagger}  & LoRA($\sim$7B)         & 3D point cloud & 23.5 & 51.9 \\
PointLLM-7B\textsuperscript{\textdagger}     & Full($\sim$7B)  & 3D point cloud & 41.2 & 53.4 \\
PointLLM-13B\textsuperscript{\textdagger}    & Full($\sim$13B) & 3D point cloud & 46.6 & 53.0 \\
ShapeLLM-7B\textsuperscript{\textdagger}     & Full($\sim$7B)  & 3D point cloud & 47.4 & 53.1 \\
ShapeLLM-13B\textsuperscript{\textdagger}    & Full($\sim$13B) & 3D point cloud & 53.1 & 53.0 \\
\midrule
Adapter & PEFT(9.5M) & 3D point cloud & 49.7 & 53.2 \\
MPM     & PEFT(78M)  & 3D point cloud & \textbf{57.6} & \textbf{70.5} \\
\bottomrule
\end{tabularx}
\vspace{-0.4cm}
\end{table}

\subsection{Experimental Setup}

\noindent\textbf{Backbone and Frozen Feature Extractors.}
All experiments are built on Qwen3-Omni-30B-A3B, which serves as the frozen MLLM backbone.
We keep the entire backbone \emph{fully frozen} throughout training: we never update the LLM parameters nor its native multimodal branches.
In particular, we use the image branch of Qwen3-Omni-30B-A3B as a fixed visual feature extractor for image inputs.
For newly introduced modalities, we adopt off-the-shelf encoders in baselines and freeze them as well: for 3D point clouds, we use PointBERTv1.2~\cite{pointbert} released by PointLLM~\cite{pointllm}; for tactile sensing, we use the tactile-pretrained ViT-Base released in TVL~\cite{tvl}.
Consequently, all trainable parameters in our system reside on the modality side (our alignment autoencoder and the Stage~II mappers), which preserves the backbone’s native multimodal capabilities and mitigates catastrophic forgetting.

\vspace{0.2cm}
\noindent\textbf{Modality-Side Autoencoder Architecture.}
We instantiate a lightweight autoencoder to learn a unified shared latent space from paired supervision and to support cross-modal recomposition.
Across text/image/tactile/point modalities, we use a unified encoder template: each modality encoder is a 4-layer attention stack that maps frozen modality tokens into a structured latent representation.
On the decoding side, we employ 12-layer attention decoders.

\vspace{0.2cm}
\noindent\textbf{Paired Training Data.}
We use three types of paired training data: text--3D, text--tactile, and image--tactile.
For text--3D, we combine two sources: (i) 660K Objaverse point clouds with captions provided by the PointLLM annotations, and (ii) an additional 100K synthetic point--caption pairs generated using Microsoft TRELLIS~\cite{trellis}.
For tactile, we use the TVL dataset and restrict supervision to touch--text and touch--image pairs only, resulting in 43K paired samples in total.

\subsection{Evaluation}
\label{sec:evaluation}

\noindent\textbf{3D Point Cloud Benchmarks.}
We evaluate 3D point understanding on two benchmarks: ModelNet40~\cite{modelnet}, a standard 40-way object classification dataset, and 3D MM-VET released by ShapeLLM~\cite{shapellm}, a 3D question answering benchmark with free-form responses.
On ModelNet40, we prompt the model to choose one label from the 40 canonical categories and report top-1 accuracy.
On 3D-MMVET, we follow the official protocol and use the benchmark’s GPT-based evaluation script to judge answer correctness.
We compare against the frozen Qwen3-Omni-30B-A3B backbone~\cite{xu2025qwen3omni}, several 3D-aware multimodal language models trained/fine-tuned with 3D--language supervision, and an adapter baseline implemented by us that inserts a two-layer MLP projector on top of the same Qwen3-Omni backbone.
MPM outperforms all baselines on both benchmarks.

\vspace{0.2cm}
\noindent\textbf{Tactile Benchmarks.}
For tactile sensing, we evaluate on TVL, where the input is a tactile image (optionally with an object image), and the model describes tactile attributes such as \texttt{soft} or \texttt{smooth}.
Following prior work, we use GPT-4 for score prediction, ground-truth agreement on a 1--10 scale.
As shown in \cref{tab:tactile_bench}, MPM achieves the best performance among tactile-capable baselines, and it does so \emph{without} requiring the auxiliary object image at inference time. The experiment details and showcases can be found in Appendix B.

\begin{table}[t]
\centering
\caption{\textbf{Performance comparison on TVL benchmarks.} Most results come from TVL~\cite{tvl}. All results are evaluated by GPT-4.}
\vspace{-0.2cm}
\label{tab:tactile_bench}
\setlength{\tabcolsep}{7pt}
\renewcommand{\arraystretch}{1.15}
\begin{tabular}{lccc}
\hline
Method & SSVTP & HCT & TVL \\
\hline
LLaVA-1.5 13B          & 3.55 & 3.63 & 3.62 \\
ViP-LLaVA 13B          & 4.10 & 3.76 & 3.80 \\
GPT-4V                 & 5.02 & 4.42 & 4.49 \\
TVL-LLaMA (ViT-Base)   & 6.16 & 4.89 & 5.03 \\
Qwen3-Omni             & 5.30 & 4.28 & 4.40 \\
\hline
MPM                   &  \textbf{6.24}  & \textbf{5.09}   & \textbf{5.22}   \\
\hline
\end{tabular}
\vspace{-0.4cm}
\end{table}

\subsection{Ablations}
\label{sec:ablation}

\noindent\textbf{Can we skip Stage~I and directly ``plug'' a new modality into a frozen MLLM via recomposition? No.}
We remove Stage~I entirely and only optimize the Stage~II interface to the frozen backbone.
This \emph{recomposition-only} variant performs substantially worse than the full model (Table~\ref{tab:ablation}), suggesting that under pairwise data it is difficult to simultaneously discover modality semantics and align them to the backbone embedding space in a single end-to-end step.
Stage~I mitigates this by learning an information-preserving shared latent code on which recomposition can reliably operate; without such a common latent space, the cross-modal ``translation'' becomes poorly constrained.

\vspace{0.2cm}
\noindent\textbf{If Stage~I already aligns representations, do we still need Stage~II? Yes.}
We keep Stage~I training intact but remove Stage~II, \ie, we do not connect the learned shared code to the frozen MLLM decoder.
This \emph{Stage~I only} variant underperforms the full pipeline (Table~\ref{tab:ablation}), indicating that aligned representations alone do not directly yield strong language-grounded performance.
Stage~I decoders are lightweight reconstruction heads and are not meant to model fluent language generation or instruction-following behavior; Stage~II is essential as a compatibility bridge that maps $\hat{\mathbf z}_c$ into the backbone’s input embedding space and leverages the frozen decoder for robust downstream performance.

\vspace{0.2cm}
\noindent\textbf{Which ingredient in Stage~I matters more: reconstruction or contrastive alignment? Both matter.}
To isolate the contributions of the two Stage~I objectives, we train two variants that remove either the self-modal reconstruction losses $\mathcal{L}^{(m)}_{\mathrm{rec}}$ or the pairwise contrastive losses $\mathcal{L}^{(ij)}_{\mathrm{align}}$, while keeping all other settings unchanged.
Removing reconstruction hurts performance more than removing contrastive alignment (Table~\ref{tab:ablation}), consistent with the intended design: reconstruction constrains the latent space to remain informative and prevents collapse, while contrastive learning provides the cross-modal signal that pulls paired modalities into a shared semantic coordinate system.
 More ablation studies and discussions can be found in Appendix C.

\begin{table}[t]
\centering
\caption{\textbf{Ablation study on our evaluation suite (higher is better).}
\textit{Drop} is the absolute decrease in ModelNet40 score relative to the full model.}
\vspace{-0.2cm}
\label{tab:ablation}
\begin{tabular}{lcc}
\toprule
Variant & ModelNet40 score (\%) $\uparrow$ & Drop (\%) $\downarrow$ \\
\midrule
Full (Stage~I + Stage~II) & 70.5 & 0.0 \\
w/o Stage~I (recomposition-only) & 47.3 & 23.2 \\
w/o Stage~II (Stage~I only) & 63.6 & 6.9 \\
w/o reconstruction $\mathcal{L}_{\mathrm{rec}}$ in Stage~I & 4.1 & 66.4 \\
w/o contrastive $\mathcal{L}_{\mathrm{align}}$ in Stage~I & 54.7 & 15.8 \\
\bottomrule
\end{tabular}
\vspace{-0.4cm}
\end{table}
\section{Conclusion}

In this work, we study MLLM training under {pairwise-aligned} supervision, without requiring fully jointly aligned multimodal data. We provide a theoretical analysis showing that, under suitable connectivity, the shared latent subspace across modalities can be recovered from overlapping modality pairs. Building on this insight, we propose a two-stage framework consisting of latent representation alignment and cross-modal recomposition, enabling scalable modality extension while keeping pre-trained MLLM frozen. Experiments with newly introduced 3D and tactile modalities demonstrate strong cross-modal transfer using only pairwise supervision. \textbf{Limitations.}
While our framework demonstrates promising results under pairwise supervision, scaling it to a broader set of modalities and validating its effectiveness on more diverse and complex downstream tasks remain important directions for future work.


\bibliographystyle{splncs04}
\bibliography{main}


\clearpage
\appendix

\crefname{section}{Appendix}{Appendices}
\Crefname{section}{Appendix}{Appendices}

\beginsupplement

\section{Proof and Discussions}
\label{proof}

We summarize the notations used throughout the paper in Table~\ref{tab:notation_proof_onepage}.

\begin{table*}[!t]
\centering
\footnotesize
\renewcommand{\arraystretch}{0.92}
\setlength{\tabcolsep}{6pt}
\caption{Key notations in this paper.}
\label{tab:notation_proof_onepage}
\begin{tabularx}{\textwidth}{@{}lX@{}}
\toprule
Notation & Meaning \\
\midrule
$M,\ [M]$ & Number of modalities; index set $\{1,\dots,M\}$. \\
$\vec{x}^{(m)}\in\mathbb{R}^{d_x^{(m)}}$ & Observation of modality $m$. \\
$\vec{z}^{(m)}=[\vec{z}^{(m)}_c,\vec{z}^{(m)}_s]$ & Latents of modality $m$ (shared/specific blocks). \\
$d_c^{(m)}, d_s^{(m)}$ & Dimensions of $\vec{z}^{(m)}_c$ and $\vec{z}^{(m)}_s$. \\
$\tilde{\vec{z}}^{(j)}_{\setminus i}$ & Latents for generating $\vec{x}^{(j)}$ excluding $\vec{z}^{(i)}_c$ (pairwise). \\
$\mathcal{G}=([M],\mathcal{E})$ & Modality graph; edge $\{i,j\}\in\mathcal{E}$ means paired data observed. \\
$\mathcal{N}(i)$ & Neighbor set $\{j\neq i:\{i,j\}\in\mathcal{E}\}$. \\
$g_i$ & Single-modality generator: $\vec{x}^{(i)}=g_i(\vec{z}^{(i)}_c,\vec{z}^{(i)}_s)$. \\
$g_{j\leftarrow i}$ & Pairwise generator: $\vec{x}^{(j)}=g_{j\leftarrow i}(\vec{z}^{(i)}_c,\tilde{\vec{z}}^{(j)}_{\setminus i})$. \\
$\hat g_i,\ \hat g_{j\leftarrow i}$ & Estimated generators. \\
$J_f$ & Jacobian of function $f$. \\
$I(\vec{z}^{(i)}_c)$ & Index set of $\vec{z}^{(i)}_c$ coordinates (for Jacobian slicing). \\
$I_{d_c^{(i)}}$ & Identity matrix on the shared block (use in Assumption~\ref{subspace_ass}\ref{subspace_ass:A2}). \\
$A_{j\leftarrow i}$ & Partial Jacobian $\bigl[J_{g_{j\leftarrow i}}\bigr]_{:,I(\vec{z}^{(i)}_c)}\in\mathbb{R}^{d_x^{(j)}\times d_c^{(i)}}$. \\
$L_{ij}$ & Coefficients such that $\sum_{j\in\mathcal{N}(i)} L_{ij}A_{j\leftarrow i}=I_{d_c^{(i)}}$. \\
$h_i=g_i^{-1}\circ \hat g_i$ & Coordinate map from estimated to true latents (modality $i$). \\
$h_i^{(c)},h_i^{(s)}$ & Shared/specific components of $h_i$. \\
$\varphi_i$ & Reduced map $\vec{z}^{(i)}_c=\varphi_i(\hat{\vec{z}}^{(i)}_c)$ (from Theorem~\ref{thm:multi_pair_ident}). \\
$\pi_m$ & Injective map $[\vec{z}^{(m)}_c]_k=c_{\pi_m(k)}$ linking to global shared factors. \\
$\vec{z}_e^{\,c}$ & Extended shared vector $[(\vec{z}^{(1)}_c)^\top,\dots,(\vec{z}^{(M)}_c)^\top]^\top$. \\
$\mathcal I_m$ & Indices of modality-$m$ block within $\vec{z}_e^{\,c}$. \\
$\mathcal I_{m\mid n}$ & Non-overlap indices of $m$ w.r.t.\ $n$ (for de-duplicated sparsity). \\
$g_c,\ G$ & SCM on $\vec{z}_e^{\,c}$ and its Jacobian $G=\partial g_c/\partial(\vec{z}_e^{\,c})^\top$. \\
$G^{m\leftarrow n}$ & Block $G_{\mathcal I_m,\mathcal I_n}$. \\
$\|M\|_{0,\mathcal E}$ & De-duplicated cross-modality sparsity counted only on edges $\mathcal E$. \\
$\mathcal P(d),\ \mathcal P_c$ & Generalized permutations; block-diagonal version across modalities. \\
$h_m,\ h$ & Block diffeomorphisms $h_m$ and their concatenation $h$ on $\vec{z}_e^{\,c}$. \\
$T=J_h$ & Jacobian of $h$; $T=\mathrm{diag}(T_1,\dots,T_M)$. \\
$\hat g_c,\ \hat G$ & Reparameterized SCM $\hat g_c=h^{-1}\circ g_c\circ h$ and its Jacobian $\hat G$. \\
$P_m,\ \phi_{m,k}$ & Component-wise ambiguity: permutation $P_m$ and 1D diffeomorphisms $\phi_{m,k}$. \\
\bottomrule
\end{tabularx}
\end{table*}


\subsection{Proof of Block Identifiability (Subspace Identifiability)}
\label{app:block_ident}

This appendix provides a detailed proof for Theorem ~\ref{thm:multi_pair_ident} and clarifies the meaning of
Assumption~\ref{subspace_ass}\ref{subspace_ass:A2} in the main text.
Throughout, we fix a target modality $i\in[M]$ and its neighbor set $\mathcal{N}(i)$ . We use the same notations as in the main text:
$\vec{x}^{(m)}$, $(\vec{z}^{(m)}_c,\vec{z}^{(m)}_s)$, $\tilde{\vec{z}}^{(j)}_{\setminus i}$, and
the partial Jacobian
\[
A_{j\leftarrow i}(\vec{z}^{(i)}_c,\tilde{\vec{z}}^{(j)}_{\setminus i})
:=
\bigl[J_{g_{j\leftarrow i}(\vec{z}^{(i)}_c,\tilde{\vec{z}}^{(j)}_{\setminus i})}\bigr]_{:,\,I(\vec{z}^{(i)}_c)}
\in\mathbb{R}^{d_x^{(j)}\times d_c^{(i)}}.
\]

First, we introduce the meanings of \textbf{Assumption~\ref{subspace_ass}\ref{subspace_ass:A2}}.
Assumption~\ref{subspace_ass}\ref{subspace_ass:A2} is a \emph{collective} full-rank condition:
each individual neighbor $j$ may only reveal a subspace of directions in $\mathbb{R}^{d_c^{(i)}}$,
but aggregating all neighbors must span the full shared block. The following lemma provides
equivalent characterizations.

\begin{lemma}[Equivalent collective full-rank characterizations]
\label{lem:equiv_collective}
Fix a target modality $i$ and its neighbor set $\mathcal{N}(i)$.
For each $j\in\mathcal{N}(i)$, let
$A_{j\leftarrow i}\in\mathbb{R}^{d_x^{(j)}\times d_c^{(i)}}$
be any matrices (e.g., the Jacobians $A_{j\leftarrow i}(\vec{z}^{(i)}_c,\tilde{\vec{z}}^{(j)}_{\setminus i})$
evaluated at a fixed latent realization).
Define the linear operator
\[
\mathcal{A}_i:\mathbb{R}^{d_c^{(i)}}\to \prod_{j\in\mathcal{N}(i)}\mathbb{R}^{d_x^{(j)}},
\qquad
\mathcal{A}_i(\mathbf{v})=\bigl(A_{j\leftarrow i}\mathbf{v}\bigr)_{j\in\mathcal{N}(i)}.
\]
Then the following are equivalent:
\begin{enumerate}
\item[(i)] (\textbf{Injectivity}) $\mathcal{A}_i$ is injective:
$\mathcal{A}_i(\mathbf{v})=0 \Rightarrow \mathbf{v}=0$.
Equivalently, \[\bigcap_{j\in\mathcal{N}(i)}\mathrm{Null}(A_{j\leftarrow i})=\{0\}\].
\item[(ii)] (\textbf{Gram positive definite})
The Gram matrix
\[
G_i:=\sum_{j\in\mathcal{N}(i)} A_{j\leftarrow i}^\top A_{j\leftarrow i}
\in\mathbb{R}^{d_c^{(i)}\times d_c^{(i)}}
\]
is positive definite (hence full rank).
\item[(iii)] (\textbf{Coefficient combination / left inverse})
There exist matrices
$L_{ij}\in\mathbb{R}^{d_c^{(i)}\times d_x^{(j)}}$ such that
\[
\sum_{j\in\mathcal{N}(i)} L_{ij}A_{j\leftarrow i}=I_{d_c^{(i)}}.
\]
Moreover, if $G_i\succ 0$, one valid choice is $L_{ij}=G_i^{-1}A_{j\leftarrow i}^\top$.
\end{enumerate}
\end{lemma}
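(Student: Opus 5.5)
The plan is to prove the cycle of implications (i)$\Rightarrow$(ii)$\Rightarrow$(iii)$\Rightarrow$(i). Throughout, the $A_{j\leftarrow i}$ are fixed real matrices, so every step is finite-dimensional linear algebra. The key tool is to view $\mathcal{A}_i$ as a single stacked matrix
\[
\mathbf{A}_i:=\begin{bmatrix} A_{j_1\leftarrow i}\\ \vdots \\ A_{j_K\leftarrow i}\end{bmatrix}\in\mathbb{R}^{(\sum_{j} d_x^{(j)})\times d_c^{(i)}},
\]
where $\mathcal{N}(i)=\{j_1,\dots,j_K\}$. With this notation $G_i=\mathbf{A}_i^\top\mathbf{A}_i$, and $\sum_j L_{ij}A_{j\leftarrow i}=\mathbf{L}_i\mathbf{A}_i$ with $\mathbf{L}_i=[L_{ij_1},\dots,L_{ij_K}]$. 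The three conditions then become the familiar statements that $\mathbf{A}_i$ has trivial kernel, that $\mathbf{A}_i^\top\mathbf{A}_i\succ0$, and that $\mathbf{A}_i$ has a left inverse.

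For the reformulation in (i), note that $\mathcal{A}_i(\mathbf{v})=0$ holds iff $A_{j\leftarrow i}\mathbf{v}=0$ for every $j$. This is exactly $\mathbf{v}\in\bigcap_j\mathrm{Null}(A_{j\leftarrow i})$, so the two phrasings of (i) agree.

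\textbf{(i)$\Rightarrow$(ii).} Since $G_i$ is a sum of positive semidefinite matrices, it is symmetric PSD. For any $\mathbf{v}$,
\[
\mathbf{v}^\top G_i\mathbf{v}=\sum_j\|A_{j\leftarrow i}\mathbf{v}\|_2^2=\|\mathcal{A}_i(\mathbf{v})\|^2 .
\]
If this vanishes, then $\mathcal{A}_i(\mathbf{v})=0$, and injectivity gives $\mathbf{v}=0$. Hence $G_i\succ0$, and in particular $G_i$ is invertible.

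\textbf{(ii)$\Rightarrow$(iii).} Take $L_{ij}=G_i^{-1}A_{j\leftarrow i}^\top$, which has the required size $d_c^{(i)}\times d_x^{(j)}$. Then
\[
\sum_j L_{ij}A_{j\leftarrow i}=G_i^{-1}\sum_jA_{j\leftarrow i}^\top A_{j\leftarrow i}=G_i^{-1}G_i=I_{d_c^{(i)}} .
\]
This also establishes the ``moreover'' clause.

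\textbf{(iii)$\Rightarrow$(i).} Suppose $\mathcal{A}_i(\mathbf{v})=0$. Then
\[
\mathbf{v}=I\mathbf{v}=\sum_jL_{ij}(A_{j\leftarrow i}\mathbf{v})=0,
\]
so $\mathcal{A}_i$ is injective.

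None of these steps is a real obstacle. The only point needing care is bookkeeping: the product space $\prod_j\mathbb{R}^{d_x^{(j)}}$ must be given its natural Euclidean norm, so that $\|\mathcal{A}_i\mathbf{v}\|^2$ equals $\mathbf{v}^\top G_i\mathbf{v}$. It is also worth remarking that $G_i\succ0$ forces $d_c^{(i)}\le\sum_jd_x^{(j)}$, but no single $A_{j\leftarrow i}$ needs to have full column rank. This is the collective nature of Assumption~\ref{subspace_ass}\ref{subspace_ass:A2}.
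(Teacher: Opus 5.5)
Your proposal is correct and follows essentially the same route as the paper: stack the $A_{j\leftarrow i}$ into one matrix, use $\mathbf{v}^\top G_i\mathbf{v}=\|\mathcal{A}_i(\mathbf{v})\|^2$, take $L_{ij}=G_i^{-1}A_{j\leftarrow i}^\top$, and close with $\mathbf{v}=\sum_j L_{ij}A_{j\leftarrow i}\mathbf{v}=0$. The only difference is cosmetic: you arrange the implications as a single cycle, whereas the paper proves (i)$\Leftrightarrow$(ii) in both directions and then adds (ii)$\Rightarrow$(iii)$\Rightarrow$(i).
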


\begin{proof}
Let $d:=d_c^{(i)}$ and choose an arbitrary ordering $\mathcal{N}(i)=\{j_1,\dots,j_K\}$.
Define the block-stacked matrix
\[
A_i \;\triangleq\;
\begin{bmatrix}
A_{j_1\leftarrow i}\\
\vdots\\
A_{j_K\leftarrow i}
\end{bmatrix}
\in \mathbb R^{D_i\times d},
\qquad
D_i:=\sum_{\ell=1}^K d_x^{(j_\ell)}.
\]
Also define the canonical concatenation isomorphism
\[
\Phi:\prod_{\ell=1}^K \mathbb R^{d_x^{(j_\ell)}}\to \mathbb R^{D_i},
\qquad
\Phi\bigl( (y_{j_\ell})_{\ell=1}^K \bigr)=
\begin{bmatrix}
y_{j_1}\\ \vdots \\ y_{j_K}
\end{bmatrix}.
\]
By construction, for every $\mathbf{v}\in\mathbb R^d$,
\[
\Phi\bigl(\mathcal A_i(\mathbf{v})\bigr)
=
\begin{bmatrix}
A_{j_1\leftarrow i}\mathbf{v}\\ \vdots\\ A_{j_K\leftarrow i}\mathbf{v}
\end{bmatrix}
=
A_i \mathbf{v}.
\]
Since $\Phi$ is a linear bijection, $\mathcal A_i(\mathbf{v})=0$ iff $A_i\mathbf{v}=0$.
Thus (i) holds iff $\mathrm{Null}(A_i)=\{0\}$, i.e., $A_i$ has full column rank.

\medskip\noindent
\textbf{(i)$\Leftrightarrow$(ii).}
Using the block-stacking definition,
\[
A_i^\top A_i
=
\sum_{\ell=1}^K A_{j_\ell\leftarrow i}^\top A_{j_\ell\leftarrow i}
=
\sum_{j\in\mathcal{N}(i)} A_{j\leftarrow i}^\top A_{j\leftarrow i}
=
G_i.
\]
Therefore, for any $\mathbf{v}\neq 0$,
\[
\mathbf{v}^\top G_i \mathbf{v}
=
\mathbf{v}^\top A_i^\top A_i \mathbf{v}
=
\|A_i\mathbf{v}\|_2^2.
\]
Hence $G_i\succ 0$ iff $A_i\mathbf{v}\neq 0$ for all $\mathbf{v}\neq 0$, which is equivalent to
$\mathrm{Null}(A_i)=\{0\}$, i.e., (i).

\medskip\noindent
\textbf{(ii)$\Rightarrow$(iii).}
Assume $G_i\succ 0$ and define $L_{ij}:=G_i^{-1}A_{j\leftarrow i}^\top$.
Then
\[
\sum_{j\in\mathcal{N}(i)} L_{ij}A_{j\leftarrow i}
=
\sum_{j\in\mathcal{N}(i)} G_i^{-1}A_{j\leftarrow i}^\top A_{j\leftarrow i}
=
G_i^{-1}G_i
=
I_{d}.
\]

\medskip\noindent
\textbf{(iii)$\Rightarrow$(i).}
If $\sum_{j}L_{ij}A_{j\leftarrow i}=I_d$ and $\mathcal A_i(\mathbf{v})=0$, then $A_{j\leftarrow i}\mathbf{v}=0$
for all $j$, so
\[
\mathbf{v}
=
I_d\mathbf{v}
=
\sum_{j\in\mathcal{N}(i)} L_{ij}A_{j\leftarrow i}\mathbf{v}
=
0.
\]
Thus $\mathcal A_i$ is injective, i.e., (i) holds.
\end{proof}

At any latent realization $(\vec{z}^{(i)}_c,\tilde{\vec{z}}^{(j)}_{\setminus i})_{j\in\mathcal{N}(i)}$,
Assumption~\ref{subspace_ass}\ref{subspace_ass:A2} asserts the existence of matrices
$L_{ij}(\vec{z}^{(i)}_c,\tilde{\vec{z}}^{(j)}_{\setminus i})$ such that
\(
\sum_{j\in\mathcal{N}(i)} L_{ij}A_{j\leftarrow i}=I_{d_c^{(i)}}.
\)
By Lemma~\ref{lem:equiv_collective}, this is equivalent to the stacked operator being injective,
or equivalently, the aggregated Gram matrix being positive definite, i.e.,
the neighbors collectively span all directions of the shared block.


\begin{proof}[Proof of \cref{thm:multi_pair_ident}]

Fix a target modality $i$ and its neighbor set $\mathcal{N}(i)$.

By Assumption~\ref{subspace_ass}\ref{subspace_ass:A1}, both $g_i$ and $\hat g_i$ are smooth and invertible.
Define the (smooth) change-of-variables map
\[
h_i
\;:=\;
g_i^{-1}\circ \hat g_i:
(\hat{\vec{z}}^{(i)}_c,\hat{\vec{z}}^{(i)}_s)\mapsto (\vec{z}^{(i)}_c,\vec{z}^{(i)}_s).
\]
Write its block decomposition as
\[
h_i(\hat{\vec{z}}^{(i)}_c,\hat{\vec{z}}^{(i)}_s)
=
\Bigl(
h_i^{(c)}(\hat{\vec{z}}^{(i)}_c,\hat{\vec{z}}^{(i)}_s),\;
h_i^{(s)}(\hat{\vec{z}}^{(i)}_c,\hat{\vec{z}}^{(i)}_s)
\Bigr),
\]
so that
\[
\vec{z}^{(i)}_c = h_i^{(c)}(\hat{\vec{z}}^{(i)}_c,\hat{\vec{z}}^{(i)}_s),
\qquad
\vec{z}^{(i)}_s = h_i^{(s)}(\hat{\vec{z}}^{(i)}_c,\hat{\vec{z}}^{(i)}_s).
\]

Fix any neighbor $j\in\mathcal{N}(i)$.
Under the estimated model, generate
\[
\vec{x}^{(i)}=\hat g_i(\hat{\vec{z}}^{(i)}_c,\hat{\vec{z}}^{(i)}_s),
\qquad
\vec{x}^{(j)}=\hat g_{j\leftarrow i}\bigl(\hat{\vec{z}}^{(i)}_c,\hat{\tilde{\vec{z}}}^{(j)}_{\setminus i}\bigr).
\]
By the theorem assumption, the true and estimated models induce identical pairwise distributions
for $(\vec{x}^{(i)},\vec{x}^{(j)})$.
Hence we may consider a coupling under which the same pair
$(\vec{x}^{(i)},\vec{x}^{(j)})$ is also generated by the true model, i.e., there exist
$(\vec{z}^{(i)}_c,\vec{z}^{(i)}_s,\tilde{\vec{z}}^{(j)}_{\setminus i})$ such that
\[
\vec{x}^{(i)}=g_i(\vec{z}^{(i)}_c,\vec{z}^{(i)}_s),
\qquad
\vec{x}^{(j)}=g_{j\leftarrow i}\bigl(\vec{z}^{(i)}_c,\tilde{\vec{z}}^{(j)}_{\setminus i}\bigr),
\]
and moreover, by definition of $h_i$,
\[
\vec{z}^{(i)}_c = h_i^{(c)}(\hat{\vec{z}}^{(i)}_c,\hat{\vec{z}}^{(i)}_s).
\]
Consequently, almost surely under this coupling,
\begin{equation}
\label{eq:app_key_identity_xj}
\hat g_{j\leftarrow i}\bigl(\hat{\vec{z}}^{(i)}_c,\hat{\tilde{\vec{z}}}^{(j)}_{\setminus i}\bigr)
=
g_{j\leftarrow i}\!\left(
h_i^{(c)}(\hat{\vec{z}}^{(i)}_c,\hat{\vec{z}}^{(i)}_s),\;
\tilde{\vec{z}}^{(j)}_{\setminus i}
\right).
\end{equation}

Differentiate both sides of \eqref{eq:app_key_identity_xj} with respect to $\hat{\vec{z}}^{(i)}_s$.
The left-hand side is identically independent of $\hat{\vec{z}}^{(i)}_s$, hence its derivative is zero:
\[
\frac{\partial}{\partial \hat{\vec{z}}^{(i)}_s}\;
\hat g_{j\leftarrow i}\bigl(\hat{\vec{z}}^{(i)}_c,\hat{\tilde{\vec{z}}}^{(j)}_{\setminus i}\bigr)
=0.
\]

For the right-hand side, we need to control the dependence of $\tilde{\vec{z}}^{(j)}_{\setminus i}$
on $\hat{\vec{z}}^{(i)}_s$.
Recall that $\tilde{\vec{z}}^{(j)}_{\setminus i}$ is a subvector of the latent variables of modality $j$
.
Applying Assumption~\ref{subspace_ass}\ref{subspace_ass:A1} to modality $j$ yields that $g_j$ is invertible,
so $\vec{z}^{(j)}=g_j^{-1}(\vec{x}^{(j)})$ is a smooth function of $\vec{x}^{(j)}$ alone. Therefore
$\tilde{\vec{z}}^{(j)}_{\setminus i}$, being a subvector (or smooth projection) of $\vec{z}^{(j)}$,
is also a smooth function of $\vec{x}^{(j)}$ alone. Since $\vec{x}^{(j)}$ on the left-hand side of
\eqref{eq:app_key_identity_xj} does not depend on $\hat{\vec{z}}^{(i)}_s$, we have
\[
\frac{\partial \tilde{\vec{z}}^{(j)}_{\setminus i}}{\partial \hat{\vec{z}}^{(i)}_s}=0.
\]

Thus, differentiating \eqref{eq:app_key_identity_xj} and applying the chain rule gives
\[
0
=
A_{j\leftarrow i}\bigl(\vec{z}^{(i)}_c,\tilde{\vec{z}}^{(j)}_{\setminus i}\bigr)\;
\frac{\partial\, h_i^{(c)}(\hat{\vec{z}}^{(i)}_c,\hat{\vec{z}}^{(i)}_s)}{\partial \hat{\vec{z}}^{(i)}_s}.
\]
Define
\[
B_i(\hat{\vec{z}}^{(i)}_c,\hat{\vec{z}}^{(i)}_s)
:=
\frac{\partial\, h_i^{(c)}(\hat{\vec{z}}^{(i)}_c,\hat{\vec{z}}^{(i)}_s)}{\partial \hat{\vec{z}}^{(i)}_s}
\in\mathbb{R}^{d_c^{(i)}\times d_s^{(i)}}.
\]
Then for every $j\in\mathcal{N}(i)$,
\begin{equation}
\label{eq:app_A_B_zero}
A_{j\leftarrow i}\bigl(\vec{z}^{(i)}_c,\tilde{\vec{z}}^{(j)}_{\setminus i}\bigr)\;B_i(\hat{\vec{z}}^{(i)}_c,\hat{\vec{z}}^{(i)}_s)=0.
\end{equation}

Now left-multiply \eqref{eq:app_A_B_zero} by
$L_{ij}(\vec{z}^{(i)}_c,\tilde{\vec{z}}^{(j)}_{\setminus i})$ and sum over $j\in\mathcal{N}(i)$:
\[
\sum_{j\in\mathcal{N}(i)}
L_{ij}\bigl(\vec{z}^{(i)}_c,\tilde{\vec{z}}^{(j)}_{\setminus i}\bigr)\;
A_{j\leftarrow i}\bigl(\vec{z}^{(i)}_c,\tilde{\vec{z}}^{(j)}_{\setminus i}\bigr)\;
B_i(\hat{\vec{z}}^{(i)}_c,\hat{\vec{z}}^{(i)}_s)
=0.
\]
By Assumption~\ref{subspace_ass}\ref{subspace_ass:A2},
\[
\sum_{j\in\mathcal{N}(i)}
L_{ij}\bigl(\vec{z}^{(i)}_c,\tilde{\vec{z}}^{(j)}_{\setminus i}\bigr)\;
A_{j\leftarrow i}\bigl(\vec{z}^{(i)}_c,\tilde{\vec{z}}^{(j)}_{\setminus i}\bigr)
=
I_{d_c^{(i)}},
\]
hence
\[
B_i(\hat{\vec{z}}^{(i)}_c,\hat{\vec{z}}^{(i)}_s)=0
\qquad\text{(a.e.\ on the latent support)}.
\]
Therefore, $h_i^{(c)}(\hat{\vec{z}}^{(i)}_c,\hat{\vec{z}}^{(i)}_s)$ does \emph{not} depend on
$\hat{\vec{z}}^{(i)}_s$, i.e., there exists a smooth function $\varphi_i$ such that
\[
\vec{z}^{(i)}_c
=
h_i^{(c)}(\hat{\vec{z}}^{(i)}_c,\hat{\vec{z}}^{(i)}_s)
=
\varphi_i(\hat{\vec{z}}^{(i)}_c).
\]

Since $h_i=g_i^{-1}\circ \hat g_i$ is a diffeomorphism, its Jacobian is invertible everywhere.
Because $\partial h_i^{(c)}/\partial \hat{\vec{z}}^{(i)}_s=0$, the Jacobian of $h_i$ is block lower-triangular:
\[
J_{h_i}(\hat{\vec{z}}^{(i)}_c,\hat{\vec{z}}^{(i)}_s)
=
\begin{bmatrix}
J_{\varphi_i}(\hat{\vec{z}}^{(i)}_c) & 0\\
* & *
\end{bmatrix},
\]
which implies $\det(J_{\varphi_i}(\hat{\vec{z}}^{(i)}_c))\neq 0$ for all points on the support.
By the inverse function theorem, $\varphi_i$ is locally invertible (a local diffeomorphism)
on the shared latent support. Hence there exists a smooth inverse map $h^{(i)}:=\varphi_i^{-1}$
(on the support) such that
\[
\hat{\vec{z}}^{(i)}_c
=
h^{(i)}\bigl(\vec{z}^{(i)}_c\bigr),
\]
which matches the subspace/block identifiability notion stated in the main text.
This completes the proof.
\end{proof}

\subsection{Component-wise Identifiability}
\label{subsec:component_ident}

Theorem~\ref{thm:multi_pair_ident} establishes {block/subspace} identifiability of the shared latent block:
for each modality $m$, the learned shared representation can match the ground-truth one up to an invertible change of
coordinates, i.e., $\hat{\vec{z}}^{(m)}_c = h^{(m)}(\vec{z}^{(m)}_c)$ (equivalently $\vec{z}^{(m)}_c = \tilde h^{(m)}(\hat{\vec{z}}^{(m)}_c)$)
for some diffeomorphism $h^{(m)}$.
This is already sufficient for {cross-modal alignment at the subspace level} (e.g., transfer, retrieval, or matching),
but it still allows arbitrary {within-block mixing}: a single learned coordinate may entangle multiple true factors.
Such an ambiguity limits {fine-grained controllability} and {interpretability} of generative models:
for controllable cross-modal generation (e.g., changing one semantic factor and generating consistent changes in image/text/tactile),
we would like a {one-to-one correspondence} between latent {components} across modalities, rather than only a shared subspace.

This motivates a stronger notion---{component-wise identifiability}---which aims to reduce the remaining ambiguity from
an arbitrary diffeomorphism to only {(i) within-block permutation} and {(ii) element-wise invertible scalings}.
As discussed in the disentanglement literature, such a stronger conclusion is impossible without additional inductive bias;
a widely used and practically meaningful bias is some form of {sparsity/structural simplicity} in the underlying mechanisms
\cite{zhang2024causal}.

We say $\hat{\vec{z}}^{(m)}_c$ is {component-wise identifiable} w.r.t.\ $\vec{z}^{(m)}_c$ if there exist
a permutation matrix $P_m$ and element-wise diffeomorphisms $\{\phi_{m,k}\}_{k=1}^{d_c^{(m)}}$ such that, on the support,
\[
\hat{\vec{z}}^{(m)}_c
=
P_m\,\phi_m(\vec{z}^{(m)}_c),
\qquad
\phi_m(\vec{u})
=
\bigl(\phi_{m,1}(u_1),\dots,\phi_{m,d_c^{(m)}}(u_{d_c^{(m)}})\bigr).
\]
Equivalently, $\vec{z}^{(m)}_c = \phi_m^{-1}\!\bigl(P_m^\top \hat{\vec{z}}^{(m)}_c\bigr)$.
Under this definition, each learned component corresponds to {exactly one} ground-truth shared factor (up to 1D reparameterization),
enabling {factor-level interventions} and {controllable} cross-modal generation.

We emphasize that the component-wise identifiability result in this appendix is \textbf{not} the main contribution of our work. Similar identifiability guarantees under closely related settings have already been established in prior studies~\cite{sun2024causal}. Our contribution differs from existing results in two key aspects.

First, we consider a more general structural case in which a single latent factor may simultaneously explain multiple observed modality pairs, \ie, it can be incident to more than one edge in the observation graph. This departs from the standard assumption that latent components align to disjoint modality pairs. To accommodate such shared factors, we introduce an extended latent representation and impose an appropriate sparsity structure that captures how each component participates across edges.
Second, our analysis starts from paired data only. Compared with settings that assume richer supervision (for example, multi-way aligned tuples), restricting to pairwise observations makes the problem fundamentally more under-determined. This motivates additional assumptions on the observation model to rule out ambiguous solutions and to ensure that the latent decomposition remains identifiable from pairwise measurements.

The overall proof strategy \textbf{follows} the line of reasoning in~\cite{sun2024causal}, while adapting the arguments to our extended latent construction and the pairwise-only supervision regime. 

Beyond establishing the formal statement, our goal in this section is to provide the community with clearer insight into what kinds of identifiability guarantees for multimodal latent variables are attainable when learning is driven purely by paired data.

Starting from block-identifiable solutions, we further introduce a {sparsity-based selection principle} on the
{cross-modality interactions} in the extended shared coordinates $\vec{z}^{\,c}_e$ (after carefully removing duplicated/overlapping
coordinates across modalities via the non-overlap index sets $\mathcal I_{m\mid n}$).
Intuitively, if the true cross-modality mechanism is sparse, then nontrivial within-block mixing tends to {spread} each cross-modality
dependency across many coordinates, producing a denser cross-block Jacobian; thus, selecting the sparsest mechanism (under mild coverage
conditions on observed pairs) rules out such mixing and leaves only permutations and element-wise transformations, formalized by
Theorem~\ref{thm:component_ident_dedup}.

This section strengthens the block/subspace identifiability result in
Theorem~\ref{thm:multi_pair_ident} to {component-wise} identifiability
under additional structural assumptions.
Throughout, modalities are indexed by $[M]=\{1,\dots,M\}$ and the observation graph is
$\mathcal{G}=([M],\mathcal{E})$ as in Section~2 of the main text.

Recall the notation in the main text:
the set of all connected (distinct) shared latent factors is
\[
\vec{c}=\{c_1,c_2,\dots,c_{d_c}\},
\qquad
\vec{c}
=
\bigcup_{m=1}^M
\bigl\{[\vec{z}^{(m)}_c]_k:\ k\in[d_c^{(m)}]\bigr\},
\]
and the set of all modality-specific latent variables is
$\vec{s}=\{s_1,s_2,\dots,s_{d_s}\}$.
In particular, each modality-wise shared block $\vec{z}^{(m)}_c$ is a (possibly permuted)
sub-collection of the global shared factors $\vec{c}$.
Equivalently, there exists an injective index map $\pi_m:[d_c^{(m)}]\to[d_c]$ such that
\begin{equation}
\label{eq:pi_map_shared}
[\vec{z}^{(m)}_c]_k = c_{\pi_m(k)},\qquad k\in[d_c^{(m)}],\ \ m\in[M].
\end{equation}
This section focuses on identifying components of $\vec{z}^{(m)}_c$; the modality-specific
variables $\vec{z}^{(m)}_s$ (whose union forms $\vec{s}$) play no role here.

In the learned representation, we directly concatenate the (possibly overlapping)
shared blocks $\{\vec{z}^{(m)}_c\}_{m=1}^M$ and form the extended vector
\[
\vec{z}_e^{\,c}
\triangleq
\bigl[(\vec{z}^{(1)}_c)^\top,\dots,(\vec{z}^{(M)}_c)^\top\bigr]^\top
\in \mathbb R^{d_e},
\qquad
d_e=\sum_{m=1}^M d_c^{(m)}.
\]
Let $o_1:=0$ and $o_m:=\sum_{\ell=1}^{m-1} d_c^{(\ell)}$ for $m\ge 2$, so that the
coordinates of $\vec{z}^{(m)}_c$ occupy the contiguous index set
\[
\mathcal I_m
\triangleq
\{o_m+1,\dots,o_m+d_c^{(m)}\}
\subset[d_e].
\]

Using the index maps $\{\pi_m\}_{m=1}^M$ in \eqref{eq:pi_map_shared}, for each ordered pair $(m,n)$,
define the indices of coordinates in $\vec{z}^{(m)}_c$ that correspond to global factors also
appearing in $\vec{z}^{(n)}_c$:
\[
\mathcal I_{m,n}
\triangleq
\Bigl\{\, o_m+k\in\mathcal I_m:\ \pi_m(k)\in \pi_n([d_c^{(n)}]) \Bigr\}.
\]
(When $\{m,n\}\notin\mathcal E$, the set $\mathcal I_{m,n}$ is still well-defined; it simply reflects
whether the same global factor $c_r\in\vec{c}$ is present in both modality-wise shared blocks.)
For $m\neq n$, define the non-overlap indices
\[
\mathcal I_{m\mid n}\triangleq \mathcal I_m\setminus \mathcal I_{m,n},
\qquad
\mathcal I_{n\mid m}\triangleq \mathcal I_n\setminus \mathcal I_{n,m}.
\]

We consider a latent causal mechanism (SCM) on the extended shared coordinates,
written in vector form as
\[
\vec{z}_e^{\,c} = g_c(\vec{z}_e^{\,c},\epsilon),
\]
where $g_c$ is {not} the observation generator $g_m$, but
a compact notation for the structural assignments among shared latents.
Define its Jacobian
\[
G \triangleq
\frac{\partial g_c(\vec{z}_e^{\,c},\epsilon)}{\partial (\vec{z}_e^{\,c})^\top}
\in\mathbb R^{d_e\times d_e},
\qquad
G^{m\leftarrow n}\triangleq G_{\mathcal I_m,\mathcal I_n}.
\]

For any matrix-valued random variable $M(\vec{z}_e^{\,c})$, define
\[
\|M\|_0
\triangleq
\sum_{m\neq n}\,
\bigl\|M_{\mathcal I_{m\mid n},\,\mathcal I_{n\mid m}}\bigr\|_0,
\qquad
\|A\|_0 \triangleq |\mathrm{supp}(A)|.
\]
Here
\(
\mathrm{supp}(A):=\{(u,v):\mathbb P(A_{uv}\neq 0)>0\}.
\)
For observed pairs only, define $\|M\|_{0,\mathcal E}$ as the same quantity but with the
outer sum restricted to $\{m,n\}\in\mathcal E$.

Let $\mathcal P(d)$ be the set of generalized permutation matrices in $\mathbb R^{d\times d}$ and define
\[
\mathcal P_c
\triangleq
\Bigl\{\mathrm{diag}(P_1,\dots,P_M):\ P_m\in\mathcal P(d_c^{(m)})\Bigr\}.
\]

\begin{assumptionbox}[Pair coverage of nonzero cross blocks]
\label{ass:pair_coverage_dedup}
If $\{m,n\}\notin\mathcal E$, then there is no cross-modality dependence between the
\emph{non-overlap} parts of modalities $m$ and $n$, so $(m,n)$ contributes zero to $\|G\|_0$.
Equivalently,
\[
\|G\|_0 \;=\; \|G\|_{0,\mathcal E}.
\]
\end{assumptionbox}

\begin{assumptionbox}[Global sparsity minimality]
\label{ass:global_sparse_minimal_dedup}
For almost every $\vec{z}_e^{\,c}$, for any block-diagonal invertible
$T=\mathrm{diag}(T_1,\dots,T_M)$,
if $T\notin\mathcal P_c$ then
\[
\|T^{-1}GT\|_0 \;>\; \|G\|_0.
\]
\end{assumptionbox}

\begin{theorembox}[Pairwise component-wise identifiability]
\label{thm:component_ident_dedup}
Assume block identifiability holds for every modality's shared block
(e.g., Theorem~\ref{thm:multi_pair_ident} applied with target $i=m$ for each $m\in[M]$),
Assumption~\ref{ass:pair_coverage_dedup} holds, and
Assumption~\ref{ass:global_sparse_minimal_dedup} holds.
Let $\hat{\vec{z}}^{(m)}_c$ be any block-identified representation consistent with the observed
pairwise marginals on $\mathcal E$.
Then there exist block-wise diffeomorphisms $h_m$ such that
\[
\vec{z}^{(m)}_c = h_m\!\bigl(\hat{\vec{z}}^{(m)}_c\bigr),\qquad m\in[M].
\]
Define
\[
h(\hat{\vec{z}}_e^{\,c})
\triangleq
\bigl[h_1(\hat{\vec{z}}^{(1)}_c)^\top,\dots,h_M(\hat{\vec{z}}^{(M)}_c)^\top\bigr]^\top,
\qquad
T(\hat{\vec{z}}_e^{\,c})\triangleq J_h(\hat{\vec{z}}_e^{\,c}).
\]
Consider the reparameterized mechanism
\[
\hat g_c(\hat{\vec{z}}_e^{\,c})
\triangleq
h^{-1}\!\bigl(g_c(h(\hat{\vec{z}}_e^{\,c}),\epsilon)\bigr),
\qquad
\hat G\triangleq \frac{\partial \hat g_c}{\partial (\hat{\vec{z}}_e^{\,c})^\top}.
\]
Assume the learned solution satisfies the sparsity selection inequality on observed pairs:
\[
\|\hat G\|_{0,\mathcal E}\ \le\ \|G\|_{0,\mathcal E}.
\]
Then $T(\hat{\vec{z}}_e^{\,c})\in\mathcal P_c$ almost surely; equivalently, each $h_m$ is
component-wise up to a within-block permutation (i.e., no nontrivial within-block mixing remains).
\end{theorembox}

Theorem~\ref{thm:component_ident_dedup} upgrades block identifiability to component-wise identifiability
by leveraging a structural preference encoded through the sparsity of cross-modality interactions on the
extended coordinates.
Assumption~\ref{ass:pair_coverage_dedup} enforces consistency with the observation graph:
after removing overlap coordinates that correspond to the same global factors in $\vec{c}$,
any remaining cross-modality dependence should only appear on modality pairs that are actually observed,
so the relevant nonzero patterns are entirely captured by edges in $\mathcal E$.
This is crucial because our selection criterion is evaluated only on observed pairs;
without such a coverage condition, a reparameterization could shift dependence into unobserved pairs
without being penalized.

Assumption~\ref{ass:global_sparse_minimal_dedup} imposes a uniqueness principle:
among all block-wise invertible coordinate changes, the true representation is a strict minimizer of the
cross-modality sparsity pattern on non-overlap parts, up to block-wise generalized permutations.
Intuitively, mixing coordinates within a modality tends to spread a single cross-modality influence direction
into multiple directions, thereby increasing the number of nonzero cross-modality entries when measured on the
non-overlapping parts $\mathcal I_{m\mid n},\mathcal I_{n\mid m}$.
The theorem then formalizes a selection argument:
starting from any block-identified solution consistent with pairwise marginals, if the learned mechanism is chosen
so that it is no denser than the true one on observed pairs, then any within-block mixing that would increase
cross-modality density is excluded, forcing the remaining ambiguity to be only block-wise generalized permutations.

\subsection{Proof of component-wise identifiability}
\label{app:component_ident_proof}

\begin{proof}[Proof of Theorem~\ref{thm:component_ident_dedup}]
By the assumed block identifiability (Theorem~\ref{thm:multi_pair_ident} applied to each modality),
for every $m\in[M]$ there exists a diffeomorphism
$h_m:\mathbb R^{d_c^{(m)}}\to \mathbb R^{d_c^{(m)}}$ such that
\begin{equation}
\label{eq:block_diffeo_app}
\vec{z}_c^{(m)} = h_m\!\bigl(\hat{\vec{z}}_c^{(m)}\bigr)\qquad\text{a.s.}
\end{equation}
Define the concatenated mapping
\[
h(\hat{\vec{z}}_e^{\,c})
\triangleq
\bigl[h_1(\hat{\vec{z}}_c^{(1)})^\top,\dots,h_M(\hat{\vec{z}}_c^{(M)})^\top\bigr]^\top,
\qquad
\vec{z}_e^{\,c} = h(\hat{\vec{z}}_e^{\,c})\ \text{a.s.}
\]
Since each $h_m$ is a diffeomorphism, $h$ is a diffeomorphism on $\mathbb R^{d_e}$ and its Jacobian
\[
\begin{aligned}
T(\hat{\vec{z}}_e^{\,c})
~\triangleq~
J_h(\hat{\vec{z}}_e^{\,c})
&=
\mathrm{diag}\!\bigl(J_{h_1}(\hat{\vec{z}}_c^{(1)}),\dots,J_{h_M}(\hat{\vec{z}}_c^{(M)})\bigr) \\
&=
\mathrm{diag}(T_1,\dots,T_M)
\end{aligned}
\]
is block-diagonal and invertible a.s., where $T_m\triangleq J_{h_m}(\hat{\vec{z}}_c^{(m)})$.

Recall the (extended-coordinate) latent mechanism
\[
\vec{z}_e^{\,c} = g_c(\vec{z}_e^{\,c},\epsilon),
\qquad
G \triangleq \frac{\partial g_c(\vec{z}_e^{\,c},\epsilon)}{\partial (\vec{z}_e^{\,c})^\top}.
\]
Define the reparameterized mechanism (as in Theorem~\ref{thm:component_ident_dedup})
\[
\hat g_c(\hat{\vec{z}}_e^{\,c},\epsilon)
\triangleq
h^{-1}\!\bigl(g_c(h(\hat{\vec{z}}_e^{\,c}),\epsilon)\bigr),
\qquad
\hat G
\triangleq
\frac{\partial \hat g_c(\hat{\vec{z}}_e^{\,c},\epsilon)}{\partial (\hat{\vec{z}}_e^{\,c})^\top}.
\]
We show that $\hat G$ is a conjugate (similarity) transform of $G$ by $T$.

Since $\vec{z}_e^{\,c}=h(\hat{\vec{z}}_e^{\,c})$ and $\vec{z}_e^{\,c}=g_c(\vec{z}_e^{\,c},\epsilon)$ hold
on observational samples, substituting gives
\[
g_c(h(\hat{\vec{z}}_e^{\,c}),\epsilon) = h(\hat{\vec{z}}_e^{\,c})
\qquad\text{a.s.}
\]
Applying $h^{-1}$ to both sides yields
\[
\hat g_c(\hat{\vec{z}}_e^{\,c},\epsilon)
=
h^{-1}(h(\hat{\vec{z}}_e^{\,c}))
=
\hat{\vec{z}}_e^{\,c}
\qquad\text{a.s.}
\]
Differentiate $\hat g_c(\hat{\vec{z}}_e^{\,c},\epsilon)
=
h^{-1}(g_c(h(\hat{\vec{z}}_e^{\,c}),\epsilon))$
with respect to $\hat{\vec{z}}_e^{\,c}$ and apply the chain rule:
\begin{align}
\hat G
&=
J_{h^{-1}}\!\bigl(g_c(h(\hat{\vec{z}}_e^{\,c}),\epsilon)\bigr)\;
J_{g_c}\!\bigl(h(\hat{\vec{z}}_e^{\,c}),\epsilon\bigr)\;
J_h(\hat{\vec{z}}_e^{\,c}).
\label{eq:chain_rule_full_app}
\end{align}
On observational samples we have $g_c(h(\hat{\vec{z}}_e^{\,c}),\epsilon)=h(\hat{\vec{z}}_e^{\,c})$, so the first Jacobian is
$J_{h^{-1}}(h(\hat{\vec{z}}_e^{\,c}))$.
By the inverse function theorem,
\[
J_{h^{-1}}(h(\hat{\vec{z}}_e^{\,c})) = \bigl(J_h(\hat{\vec{z}}_e^{\,c})\bigr)^{-1}
= T(\hat{\vec{z}}_e^{\,c})^{-1}.
\]
Also $J_{g_c}(h(\hat{\vec{z}}_e^{\,c}),\epsilon)$ is exactly $G$ evaluated at $\vec{z}_e^{\,c}=h(\hat{\vec{z}}_e^{\,c})$.
Plugging into \eqref{eq:chain_rule_full_app} yields the desired conjugacy:
\begin{equation}
\label{eq:similarity_app}
\hat G
=
T(\hat{\vec{z}}_e^{\,c})^{-1}\;G\;T(\hat{\vec{z}}_e^{\,c})
\qquad\text{a.s.}
\end{equation}
In particular, for any block $(m,n)$ with $m\neq n$,
\begin{equation}
\label{eq:block_similarity_app}
\hat G^{m\leftarrow n}
=
\hat G_{\mathcal I_m,\mathcal I_n}
=
T_m^{-1}\,G^{m\leftarrow n}\,T_n.
\end{equation}

Recall the de-duplicated sparsity functionals (defined in \cref{subsec:component_ident}):
\[
\|M\|_0
\triangleq
\sum_{m\neq n}\bigl\|M_{\mathcal I_{m\mid n},\,\mathcal I_{n\mid m}}\bigr\|_0,
\qquad
\|M\|_{0,\mathcal E}
\triangleq
\sum_{\substack{m\neq n:\\ \{m,n\}\in\mathcal E}}
\bigl\|M_{\mathcal I_{m\mid n},\,\mathcal I_{n\mid m}}\bigr\|_0,
\]
where for any matrix-valued random variable $A$,
$\|A\|_0 = |\mathrm{supp}(A)|$ and
$\mathrm{supp}(A)=\{(u,v):\mathbb P(A_{uv}\neq 0)>0\}$.

Assumption~\ref{ass:pair_coverage_dedup} states that if $\{m,n\}\notin\mathcal E$, then the
(non-overlap) cross-modality dependence between $m$ and $n$ vanishes, hence
\begin{equation}
\label{eq:G_full_equals_edge_app}
\|G\|_0 = \|G\|_{0,\mathcal E}.
\end{equation}

We also need the analogous statement for $\hat G$.
Under the bookkeeping convention used in \cref{subsec:component_ident} that
$\mathcal I_{m,n}=\emptyset$ whenever $\{m,n\}\notin\mathcal E$, we have
$\mathcal I_{m\mid n}=\mathcal I_m$ and $\mathcal I_{n\mid m}=\mathcal I_n$ for all non-edges.
Thus, for any $\{m,n\}\notin\mathcal E$,
Assumption~\ref{ass:pair_coverage_dedup} implies
$G_{\mathcal I_m,\mathcal I_n}\equiv 0$ and therefore, by \eqref{eq:block_similarity_app},
\[
\hat G_{\mathcal I_m,\mathcal I_n}
=
T_m^{-1}\,G_{\mathcal I_m,\mathcal I_n}\,T_n
\equiv 0.
\]
Hence $\hat G_{\mathcal I_{m\mid n},\,\mathcal I_{n\mid m}}\equiv 0$ for all non-edges, and summing gives
\begin{equation}
\label{eq:Ghat_full_equals_edge_app}
\|\hat G\|_0 = \|\hat G\|_{0,\mathcal E}.
\end{equation}

From \eqref{eq:similarity_app} we have $\hat G = T^{-1}GT$ a.s., hence
\begin{equation}
\label{eq:norm_transfer_app}
\|\hat G\|_0 = \|T^{-1}GT\|_0,
\qquad
\|\hat G\|_{0,\mathcal E} = \|T^{-1}GT\|_{0,\mathcal E}.
\end{equation}
Assumption~\ref{ass:global_sparse_minimal_dedup} states that for almost every $\vec{z}_e^{\,c}$,
for any block-diagonal invertible $S=\mathrm{diag}(S_1,\dots,S_M)$,
if $S\notin\mathcal P_c$ then
\begin{equation}
\label{eq:minimality_app}
\|S^{-1}GS\|_0 > \|G\|_0.
\end{equation}
Apply \eqref{eq:minimality_app} to $S=T(\hat{\vec{z}}_e^{\,c})$.
If $T(\hat{\vec{z}}_e^{\,c})\notin\mathcal P_c$ on an event of positive probability, then
\[
\|T^{-1}GT\|_0 > \|G\|_0.
\]
Using \eqref{eq:G_full_equals_edge_app} and \eqref{eq:Ghat_full_equals_edge_app}, and then \eqref{eq:norm_transfer_app}, we obtain
\[
\|\hat G\|_{0,\mathcal E}
=
\|\hat G\|_0
=
\|T^{-1}GT\|_0
>
\|G\|_0
=
\|G\|_{0,\mathcal E},
\]
which contradicts the sparsity selection inequality assumed in Theorem~\ref{thm:component_ident_dedup}:
$\|\hat G\|_{0,\mathcal E}\le \|G\|_{0,\mathcal E}$.
Therefore,
\begin{equation}
\label{eq:T_in_Pc_app}
T(\hat{\vec{z}}_e^{\,c})\in\mathcal P_c
\qquad\text{a.s.}
\end{equation}

Fix a modality $m$ and write $d=d_c^{(m)}$.
From \eqref{eq:T_in_Pc_app}, we have $T_m(\hat{\vec{z}}_c^{(m)})=J_{h_m}(\hat{\vec{z}}_c^{(m)})\in \mathcal P(d)$ a.s.
Because $h_m$ is smooth, $J_{h_m}(z)$ is continuous in $z$.
The set $\mathcal P(d)$ has finitely many \emph{support patterns} (one per permutation).
Thus, on each connected component of the support of $\hat{\vec{z}}_c^{(m)}$, the support pattern of
$J_{h_m}(z)$ must be constant a.s.
Consequently, there exists a permutation $\sigma_m$ of $[d]$ such that
\begin{equation}
\label{eq:fixed_perm_pattern_app}
\frac{\partial [h_m]_r(z)}{\partial [z]_k}=0
\quad\text{whenever }k\neq \sigma_m(r),
\qquad
r\in[d],
\ \ \text{for a.e. }z.
\end{equation}
(Additionally, since $J_{h_m}(z)$ is invertible, the remaining partial derivatives
$\frac{\partial [h_m]_r(z)}{\partial [z]_{\sigma_m(r)}}$ are nonzero a.e.)

Equation~\eqref{eq:fixed_perm_pattern_app} implies that each coordinate function $[h_m]_r$
depends on \emph{only one} input coordinate $[z]_{\sigma_m(r)}$ (a.e.):
for any $k\neq \sigma_m(r)$, the partial derivative in direction $k$ vanishes (a.e.),
so $[h_m]_r$ is (a.e.) constant along lines parallel to the $k$-axis.
Hence there exist univariate functions $\varphi_{m,r}:\mathbb R\to\mathbb R$ such that
\begin{equation}
\label{eq:univariate_form_app}
[h_m]_r(z) = \varphi_{m,r}\bigl([z]_{\sigma_m(r)}\bigr),\qquad r\in[d].
\end{equation}
Moreover, because $\frac{\partial [h_m]_r}{\partial [z]_{\sigma_m(r)}}\neq 0$ a.e.\ and $h_m$ is bijective,
each $\varphi_{m,r}$ is a 1D diffeomorphism on the support (and can be extended to a global 1D diffeomorphism
under standard regularity conditions).
Therefore $h_m$ is component-wise up to a permutation:
\[
h_m(z)
=
\Bigl(\varphi_{m,1}(z_{\sigma_m(1)}),\dots,\varphi_{m,d}(z_{\sigma_m(d)})\Bigr).
\]

We have shown that $T(\hat{\vec{z}}_e^{\,c})=J_h(\hat{\vec{z}}_e^{\,c})\in\mathcal P_c$ a.s.,
which forces each $h_m$ to be component-wise up to a within-block permutation.
Since $\vec{z}_c^{(m)}=h_m(\hat{\vec{z}}_c^{(m)})$, the ambiguity remaining after block identification
is only within-block permutation and a 1D invertible reparameterization of each coordinate,
i.e.\ component-wise identifiability (up to permutations) holds.
\end{proof}

\subsection{Discussions}
\label{subsec:discussions}

\subsubsection{Why the block-level identifiability assumptions are reasonable and widely used}

\paragraph{Assumption~\ref{subspace_ass}\ref{subspace_ass:A1} (Smoothness \& invertibility).}
Assumption~\ref{subspace_ass}\ref{subspace_ass:A1} requires that (i) each single-modality generator $g_i$ is smooth and invertible,
and (ii) the joint map $g_{ij}:(\vec{z}^{(i)}_c,\vec{z}^{(i)}_s,\tilde{\vec{z}}^{(j)}_{\setminus i})\mapsto(\vec{x}^{(i)},\vec{x}^{(j)})$
is also smooth and invertible.
Conceptually, this assumption encodes a standard {information sufficiency} requirement:
if a modality discards information about its latents (non-injective mapping), then those latents cannot be recovered from observations,
and identifiability is impossible even in principle. Similar invertibility/injectivity assumptions are common in nonlinear ICA and
identifiable deep latent-variable models, including time-contrastive / auxiliary-variable based nonlinear ICA, identifiable VAEs,
and contrastive-learning based identifiability results \cite{zheng2022identifiability,yao2023multi,sun2024causal,kong2023understanding,von2021self}.

From a practical standpoint, exact global invertibility can be seen as an idealization; what is typically sufficient for theory is
{local} invertibility on the data support (i.e., the observation manifold). Moreover, many modern generative families are
explicitly invertible (e.g., flow-based models), and even when using general decoders, training objectives often encourage near-injectivity
on the relevant region of the latent space. Thus, Assumption~\ref{subspace_ass}\ref{subspace_ass:A1} is a standard and broadly adopted
regularity condition for identifiability analysis.

\paragraph{Assumption~\ref{subspace_ass}\ref{subspace_ass:A2} (Collective linear independence across neighbors).}
Assumption~\ref{subspace_ass}\ref{subspace_ass:A2} does {not} require any single pair $(i,j)$ to fully determine the shared block
$\vec{z}^{(i)}_c$. Instead, it only requires that the {collection} of neighbor responses jointly spans all directions of $\vec{z}^{(i)}_c$,
formalized by the existence of matrices $L_{ij}$ such that $\sum_{j\in\mathcal{N}(i)} L_{ij}A_{j\leftarrow i}=I(\vec{z}^{(i)}_c)$.
This is exactly the “multi-view complementarity” phenomenon: different modalities often capture different aspects of the same underlying
semantics, and each neighbor may reveal only a subset of factors, but together they cover the whole shared subspace. Closely related
collective-rank conditions also appear in multi-view / multimodal identifiability analyses \cite{yao2023multi,sun2024causal}.

In real datasets, Assumption~\ref{subspace_ass}\ref{subspace_ass:A2} is typically plausible when:
(i) the shared dimension $d_c^{(i)}$ is not larger than the effective total neighbor information
(e.g., $\sum_{j\in\mathcal N(i)} d_x^{(j)}$ in a local linear sense),
and (ii) different neighbors react to different directions of the shared factors (i.e., their Jacobians are not all aligned).
Importantly, even if each $A_{j\leftarrow i}$ is rank-deficient, the intersection of nullspaces can still be $\{0\}$, which is precisely
why pairwise but graph-connected supervision can suffice.

\subsubsection{Why the component-wise assumptions are reasonable and broadly applicable}

\paragraph{Why do we need extra assumptions at all?}
Upgrading from block identifiability to component-wise identifiability is fundamentally a {model selection} problem:
block identifiability leaves an equivalence class of solutions related by within-block diffeomorphisms.
It is well-known that without inductive bias, fully unsupervised disentanglement (component-wise recovery) is impossible in general
\cite{locatello2019challenging,scholkopf2021toward}. Therefore, to obtain a one-to-one component correspondence, we must introduce additional structure.
Our choice is a sparsity-based structural preference on cross-modality mechanisms, which is mild and interpretable.

\paragraph{Assumption~\ref{ass:pair_coverage_dedup} (Pair coverage of nonzero cross blocks).}
This assumption aligns the identifiability goal with the data availability encoded by the observation graph $\mathcal G=([M],\mathcal E)$.
After removing overlap coordinates (so that we only count dependencies between {non-overlap} parts),
Assumption~\ref{ass:pair_coverage_dedup} states that {unobserved} modality pairs should not contain additional non-overlap cross-modality
dependence that we never see in the data; otherwise, any criterion evaluated only on observed pairs would be insufficient to uniquely select
the correct component alignment.
In applications, this is reasonable when $\mathcal E$ is chosen to reflect the actual data collection design: we only aim to recover
cross-modality structure for modality pairs that are co-observed (or can be connected through co-observed chains). If some pairs are never
observed but still interact strongly in the true system, then purely observational pairwise data cannot determine those interactions,
and one should either collect additional pair data (add edges) or relax the target (settle for block-level guarantees).

\paragraph{Assumption~\ref{ass:global_sparse_minimal_dedup} (Global sparsity minimality).}
This assumption is an explicit Occam-type principle: among all block-wise invertible reparameterizations, the true representation yields the
{sparsest} cross-modality interaction pattern (on non-overlap parts), up to within-block generalized permutations.
Such “minimality/sparsity selects the correct structure” principles are widely used in causal discovery and sparse mechanism learning.
For example, classical identifiable causal models exploit structural constraints (e.g., non-Gaussianity and acyclicity in LiNGAM), and broader causal literature emphasizes minimality/simplicity as a central guiding principle
\cite{peters2017elements,lachapelle2022partial,zheng2022identifiability}. More recently, representation learning works explicitly connect {mechanism sparsity} to disentanglement and
component-level identifiability.
In multimodal settings, sparsity is often natural: only a limited number of semantic factors truly induce cross-modality dependencies, while
many variations remain modality-specific; moreover, mixing coordinates within a modality typically spreads each cross-modality influence across
multiple coordinates, increasing the number of nonzeros and thus being disfavored by the sparsity criterion~\cite{sun2024causal}.

\subsubsection{Some clarifications}

\paragraph{Q1: Is Assumption~\ref{subspace_ass}\ref{subspace_ass:A1} too strong because neural decoders are not strictly invertible?}
\paragraph{A1:}
In theory, invertibility ensures that observations preserve enough information to serve as (local) coordinates for latents.
In practice, exact global invertibility is not strictly necessary; local invertibility on the data support is often sufficient for the
Jacobian-based arguments in the proof. If the application demands strict guarantees, one can adopt explicitly invertible architectures
(e.g., flow-based generators) or injective decoders. If a modality is truly non-injective (information-destroying), then the lost factors
are not identifiable by any method, and the correct conclusion should be a weaker equivalence class.

\paragraph{Q2: Does Assumption~\ref{subspace_ass}\ref{subspace_ass:A2} require every neighbor $j$ to be fully informative about $\vec{z}^{(i)}_c$?}
\paragraph{A2:}
No. The assumption is {collective}. Each $A_{j\leftarrow i}$ can be rank-deficient; what matters is that the stacked operator across
$j\in\mathcal N(i)$ has full column rank (cf.\ Lemma~\ref{lem:equiv_collective}).
This matches practical multimodal reality: one modality may reveal some semantic directions clearly while another reveals complementary ones.

\paragraph{Q3: How many neighbors are needed for block identifiability?}
\paragraph{A3:}
A necessary condition is that the aggregated neighbor information can cover the shared dimension,
i.e., the stacked Jacobian across neighbors should have rank $d_c^{(i)}$.
Heuristically, having more diverse neighbors (heterogeneous modalities) reduces the risk that all Jacobians align in the same subspace.
This is exactly why a connected observation graph with overlapping pairwise data is powerful.

\paragraph{Q4: Why sparsity leads to component-wise (one-to-one) correspondence?}
\paragraph{A4:}
Block identifiability allows within-block mixing $h_m$.
When the true cross-modality interaction structure is sparse, a nontrivial mixing $T_m$ typically spreads one interaction direction into
many coordinates, increasing the number of nonzero cross-block entries. Therefore, selecting the sparsest admissible mechanism rules out
most mixings and leaves only generalized permutations, which imply that each coordinate depends on only one input coordinate, i.e.,
component-wise up to permutation (as formalized in Theorem~\ref{thm:component_ident_dedup}).

\paragraph{Q5: Is the sparsity minimality assumption always valid? What if the true system is dense?}
\paragraph{A5:}
If the underlying cross-modality mechanism is genuinely dense, then sparsity may not distinguish the correct component alignment, and
component-wise identifiability may fail---but block/subspace identifiability (Theorem~\ref{thm:multi_pair_ident}) can still hold.
In such cases, a different inductive bias (e.g., independence constraints, temporal structure, interventions, or supervised anchors) may be
needed to go beyond subspace-level guarantees.

\paragraph{Q6: Why do we remove overlap coordinates via $\mathcal I_{m\mid n}$ before counting sparsity?}
\paragraph{A6:}
Because the same global shared factor $c_r$ can appear in multiple modality-wise shared blocks, naively counting cross-block nonzeros would
double-count “self” relations induced by duplication and could artificially favor incorrect mixings.
The non-overlap indexing ensures the sparsity criterion focuses on {genuine} cross-modality interactions between distinct factors,
which is exactly what should constrain component alignment.

\paragraph{Q7: How should we understand the identifiability assumptions, and can MPM handle sample-wise or partially paired data?}
\paragraph{A:}
MPM provides stronger identifiability than prior settings because it does not require globally joint observations over all modalities.
Instead, it relies on the marginal distributions of observed modality pairs, so shared latent variables can be recovered from sparsely connected pairwise observations when the modality graph provides sufficient connectivity and rank information.
Assumption~2 is also collective rather than pairwise-uniform: for a given modality, the Jacobian information contributed by its paired modalities only needs to jointly form a full-rank matrix, as formalized in Lemma~1.
Thus, different modality pairs may contribute unequally to identifiability.
For example, if rank three is required and three pairs contribute ranks one, two, and one, removing the rank-two pair breaks identifiability, whereas removing a redundant rank-one pair may not.
Moreover, the framework extends to sample-wise or partially paired data because pairing is defined at the distribution level rather than requiring every sample to be fully paired.
Noisy or partial pairing can be viewed as corruption, where consistent learning remains possible when such corruption is controlled, corrected, or filtered.


\section{Additional Experimental Details}
\label{sec:appendix_b}

This appendix provides implementation and training details for reproducibility, complementing the main experimental section.
The Qwen3-Omni-30B-A3B backbone and all off-the-shelf modality encoders remain fully frozen, and we only optimize modality-side modules (the Stage~I alignment autoencoder and the Stage~II mappers).

\subsection{Training Protocol and Hyperparameters}
\label{sec:appendix_b_training}

\paragraph{Training framework.}
All training is conducted with Ms-Swift v4.0 (\texttt{ms-swift4.0}) from ModelScope, with our extensions integrated into its training pipeline (see \cref{sec:appendix_b_code}).

\paragraph{Two-stage training.}
We follow the two-stage protocol described in the main paper.
Stage~I learns the modality-side alignment autoencoder under paired supervision.
Stage~II trains the modality-to-backbone interface (recomposition) for downstream generation and/or selection, while keeping the Qwen3-Omni backbone frozen.

\paragraph{Learning rates.}
We use stage-specific learning rates:
(i) Stage~I: $\mathrm{lr}=5\times 10^{-5}$; \quad
(ii) Stage~II: $\mathrm{lr}=5\times 10^{-5}$.

\paragraph{Batch size.}
We keep the per-device batch size fixed within each stage:
(i) Stage~I: per-device batch size $=16$; \quad
(ii) Stage~II: per-device batch size $=8$.
The global batch size varies with the number of GPUs used in a given run.

\paragraph{Epochs.}
We train for:
(i) Stage~I: 10 epochs; \quad
(ii) Stage~II: 1 epoch.

\paragraph{Other hyperparameters.}
All remaining hyperparameters and engineering settings are fully specified in our released YAML configuration files and can be directly inspected in the accompanying codebase.

\subsection{Training Cost and Hardware}
\label{sec:appendix_b_cost}

\paragraph{Hardware.}
All experiments are conducted on AMD hardware.
Each MI210 server contains 8$\times$ AMD MI210 64GB GPUs.
We report wall-clock time for a single representative run for each stage and modality.

\paragraph{Stage~I cost (single server).}
Stage~I is trained on one MI210 server (8 GPUs total).
A full Stage~I run takes approximately:
(i) point cloud modality: $\sim$1 hour; \quad
(ii) tactile modality: $<1$ hour.

\paragraph{Stage~II cost (multi-server).}
Stage~II is trained on 8 MI210 servers (64 GPUs total).
A full Stage~II run takes approximately:
(i) point cloud modality: $\sim$6 hours; \quad
(ii) tactile modality: $\sim$3 hours.

\paragraph{Remark on MoE efficiency.}
Our backbone, Qwen3-Omni-30B-A3B, is a Mixture-of-Experts (MoE) model.
Because our compute cluster consists solely of AMD GPUs, we cannot use training stacks that are primarily engineered for NVIDIA GPUs (e.g., Megatron-style optimized MoE kernels and systems), leading to significantly reduced training efficiency for MoE backbones in practice.
Notably, this overhead persists even though the backbone parameters are frozen, since Stage~II still requires running the backbone graph to obtain gradients with respect to the modality-conditioned soft embeddings.
We expect that training on a comparable NVIDIA cluster with mature MoE-optimized kernels/frameworks would substantially reduce the overall training cost.

\subsection{Point-Cloud Data Augmentation}
\label{sec:appendix_b_aug}

To improve generalization on 3D point clouds, we apply data augmentation during training.

\paragraph{Random rotations.}
We augment point clouds with random rotations where rotation angles are constrained to be multiples of $90^\circ$.
In the current experimental setting, we generate an additional rotated set whose size matches the original training set (i.e., 100\% extra data volume for rotation augmentation).

\paragraph{Color-to-gray conversion.}
To reduce over-reliance on color cues and improve robustness, we additionally apply a color transformation to a subset of point clouds by converting colors to pure gray.
When this augmentation is applied, we also update the corresponding caption by replacing the original color words with ``gray'' to maintain consistency between geometry/color and text.
In our experiments, we generate this color-transformed augmentation for 10\% of the training data volume.

\paragraph{Consistency with baselines.}
Our Adapter baseline uses {exactly the same training data} as MPM and therefore employs the same augmentation pipeline (rotation and color-to-gray) for fair comparison.

\subsection{Codebase and Implementation Notes}
\label{sec:appendix_b_code}

\paragraph{Ms-Swift integration.}
We implement MPM by extending the ModelScope Ms-Swift codebase with a plugin-style design.
This enables convenient customization of the LLM training pipeline (e.g., inserting our modality-side autoencoder and Stage~II mappers) while avoiding intrusive modifications to the upstream framework.

\paragraph{Modality-specific modules.}
All newly introduced architectures and utilities are modularized by modality:
\begin{itemize}
  \item 3D point cloud components: stored under \texttt{/point\_clouds}.
  \item Tactile components: stored under \texttt{/tactile}.
\end{itemize}
Importantly, we do not modify Ms-Swift core code; instead, we register and invoke our modules through the framework's extension interfaces.

\paragraph{Documentation.}
Additional implementation instructions (environment setup, entrypoints, and configuration usage) are provided in the submitted codebase \texttt{README.md}.

\begin{figure*}[t]
\centering
\setlength{\fboxsep}{3.5pt}
\setlength{\fboxrule}{0.4pt}

\qualpanel{(a) ModelNet40}{Classification}{
  \pcview{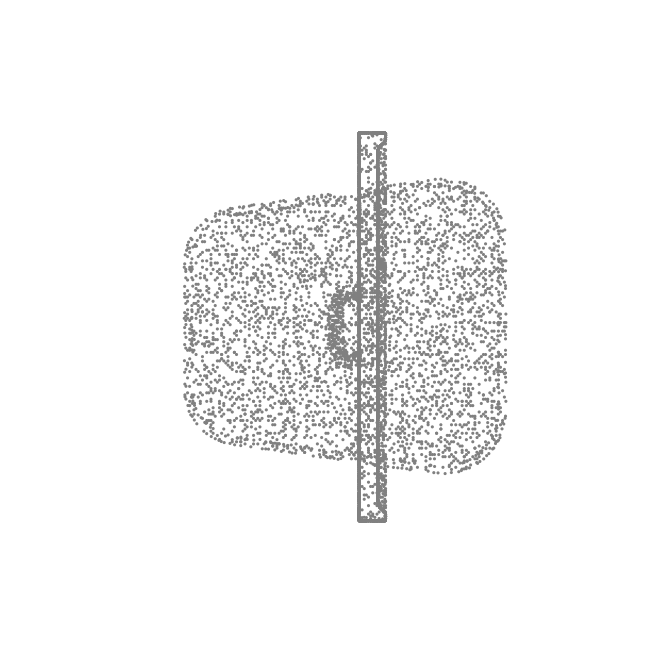}{trim=30 30 30 30,clip}}{
  \pcview{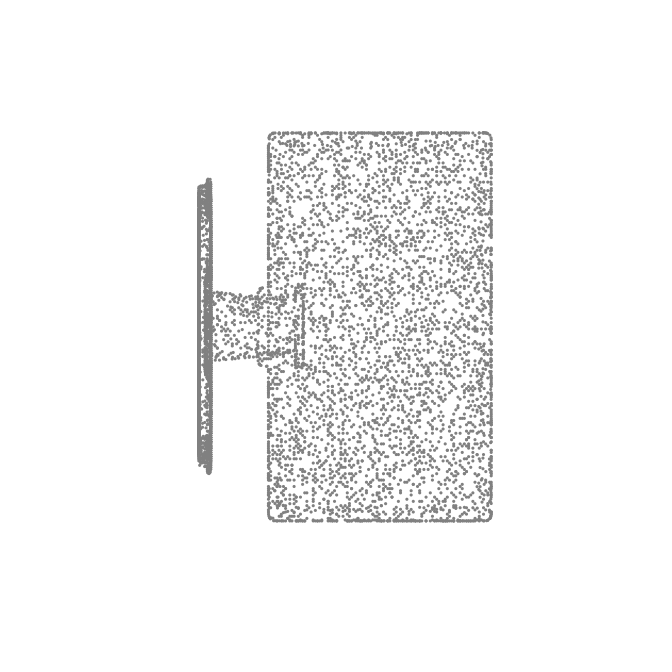}{trim=30 30 30 30,clip}}{
  \pcview{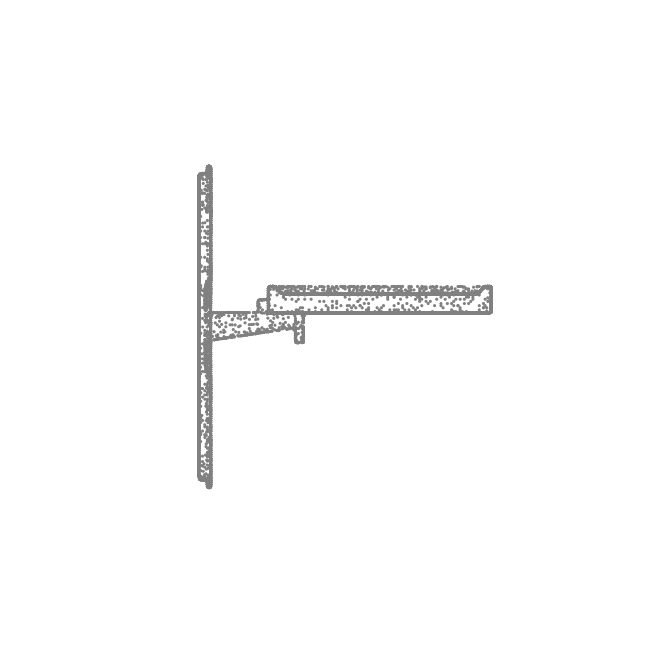}{trim=30 30 30 30,clip}}{
  Classify the object into exactly one of the following ModelNet40 categories: \{40 labels\}.}{
  \texttt{monitor}}{
  \texttt{table}.}{
  \texttt{monitor}.}
\hfill
\qualpanel{(b) ModelNet40}{Classification}{
  \pcview{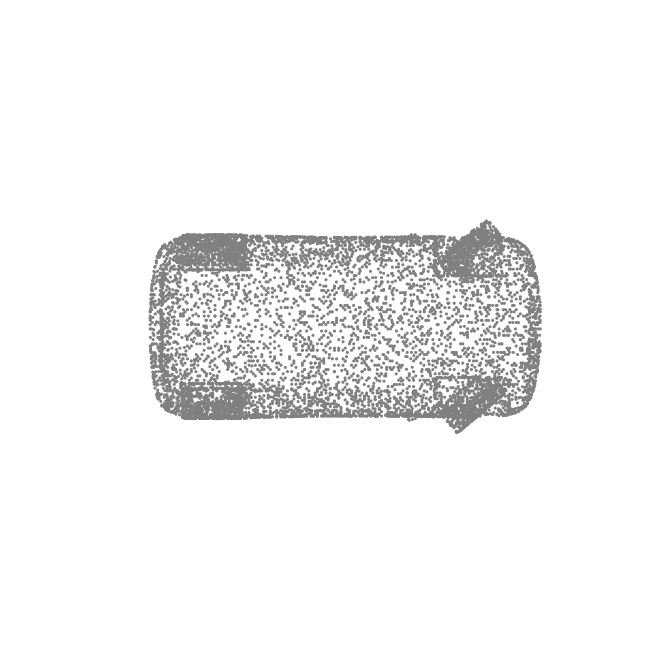}{trim=40 40 40 40,clip}}{
  \pcview{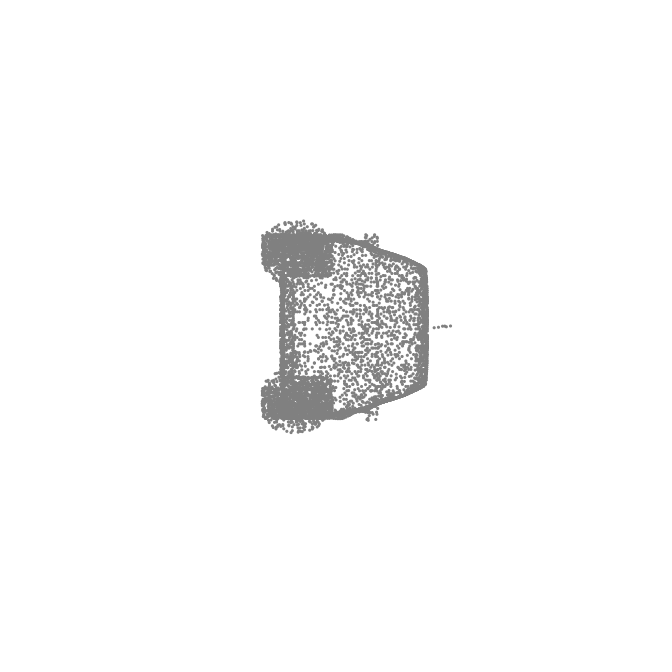}{trim=40 40 40 40,clip}}{
  \pcview{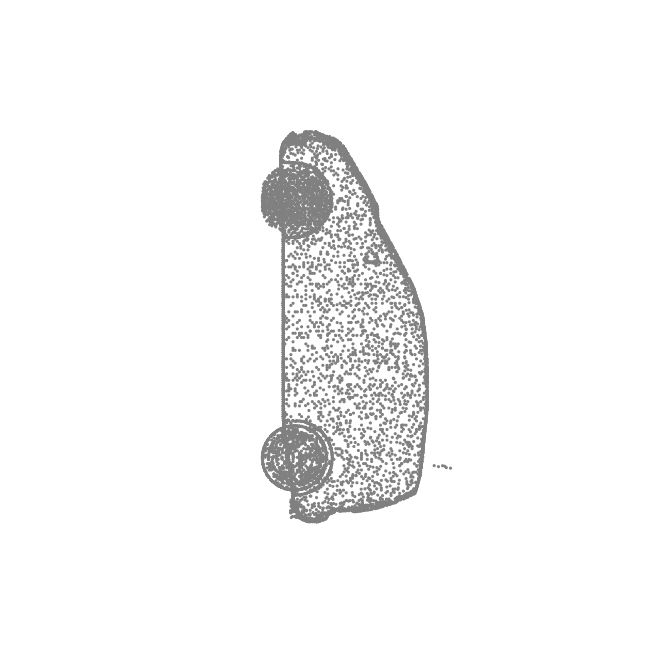}{trim=40 40 40 40,clip}}{
  Classify the object into exactly one of the following ModelNet40 categories: \{40 labels\}.}{
  \texttt{car}}{
  \texttt{bathlib}.}{
  \texttt{car}.}

\par\vspace{0.9em}

\qualpanelpad{2.6\baselineskip}
\qualpanel{(c) 3D MM-VET}{Open-ended QA}{
  \pcview{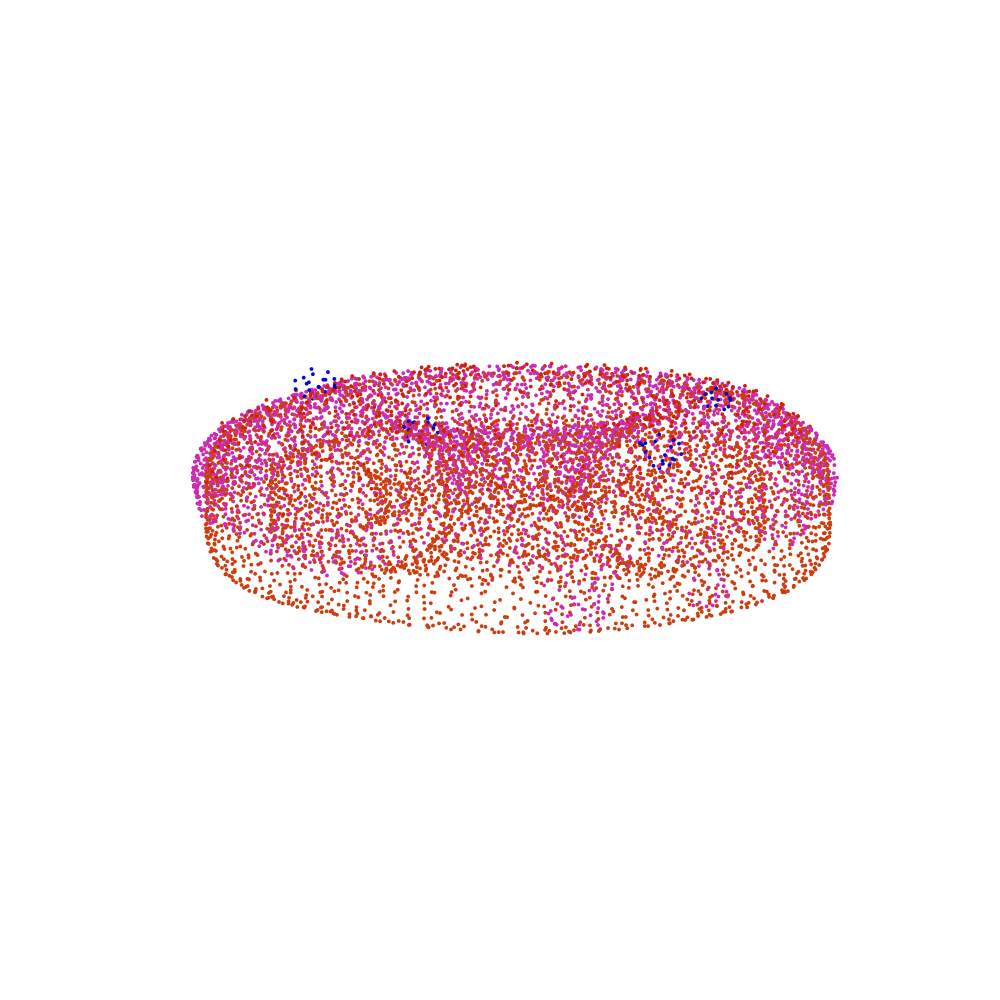}{trim=49 99 41 98,clip}}{
  \pcview{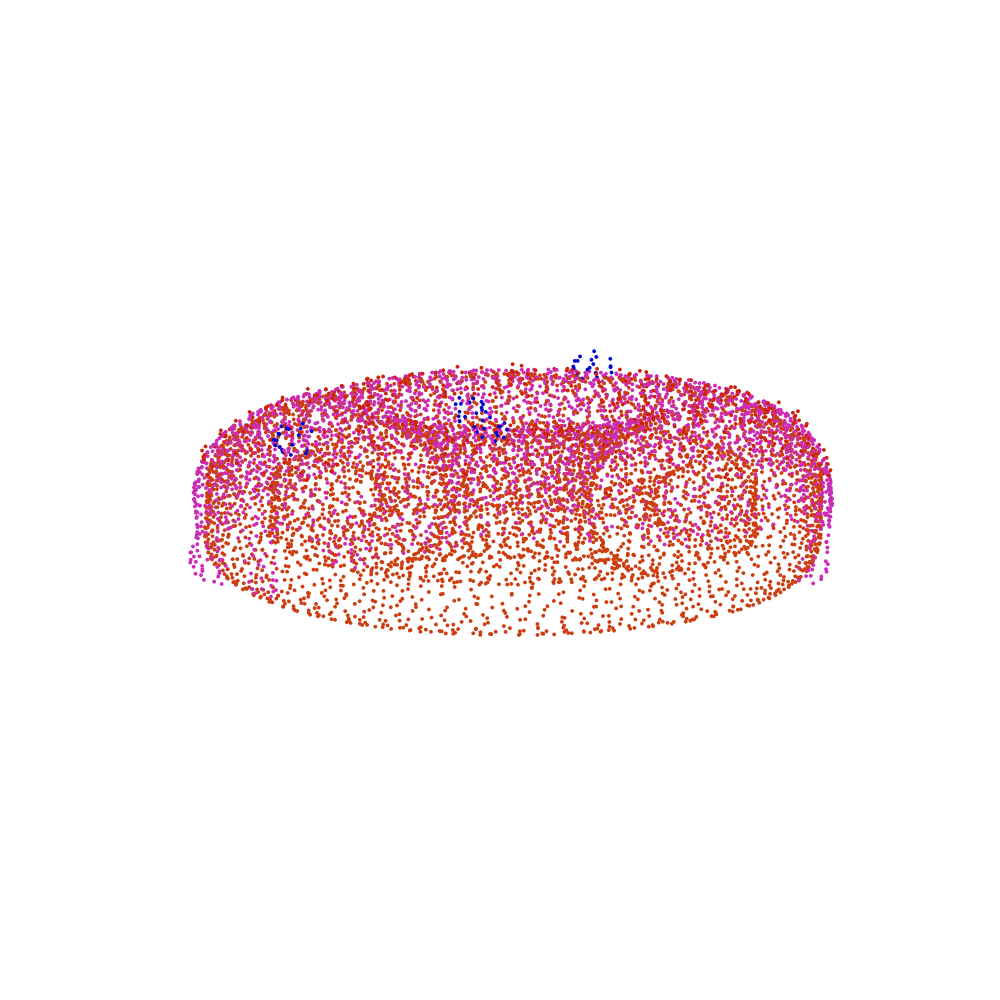}{trim=48 99 42 95,clip}}{
  \pcview{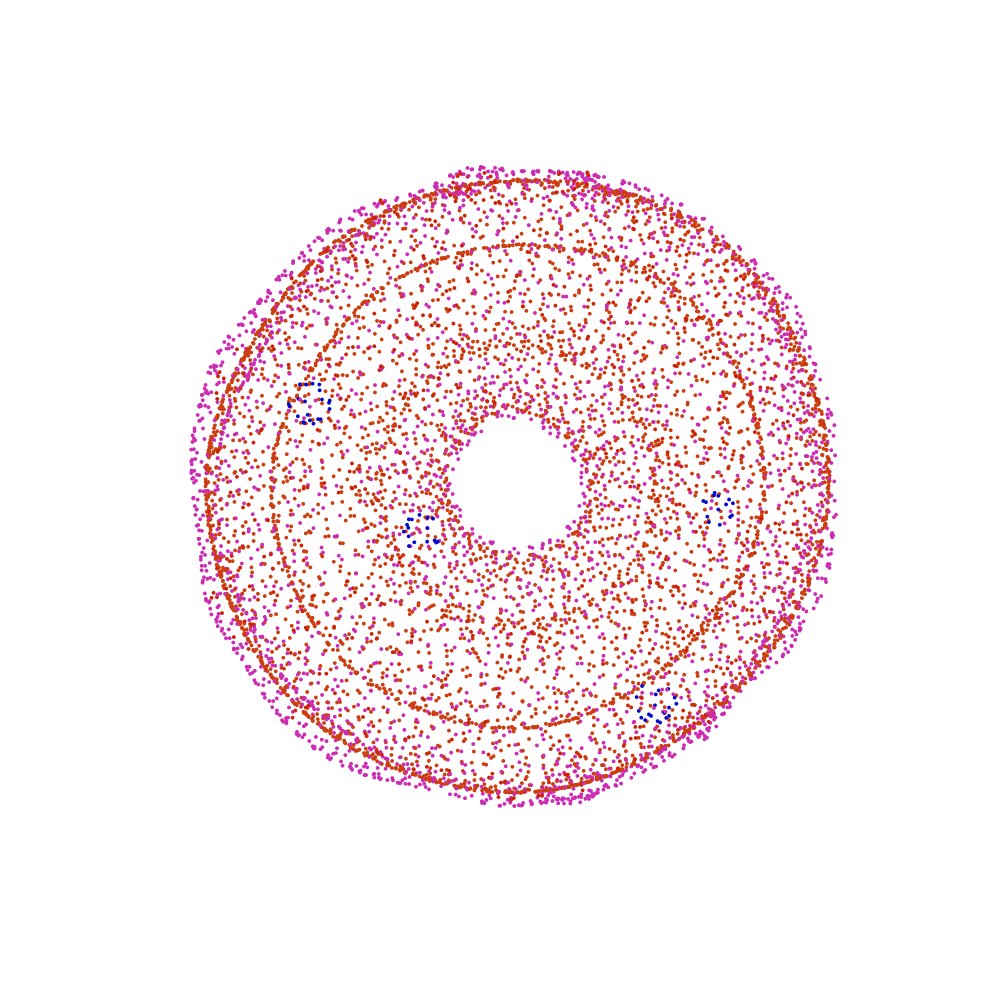}{trim=49 50 41 42,clip}}{
  What is this object mainly made of?}{
  A pink and purple donut with sprinkles.}{
  Based on the visual information provided, it is impossible to determine the material composition of the object.}{
  This is a doughnut, which is mainly made of dough, sugar and cream.}
\hfill
\qualpanel{(d) 3D MM-VET}{Open-ended QA}{
  \pcview{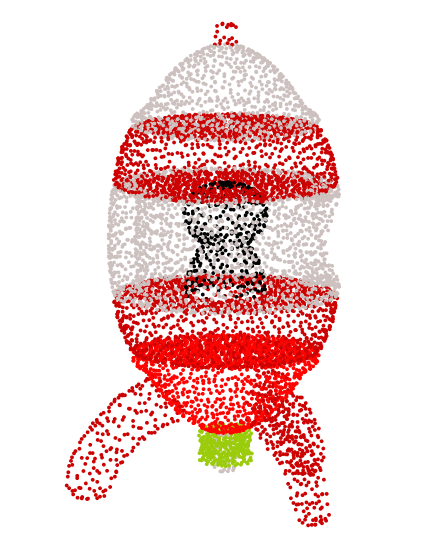}{}}{
  \pcview{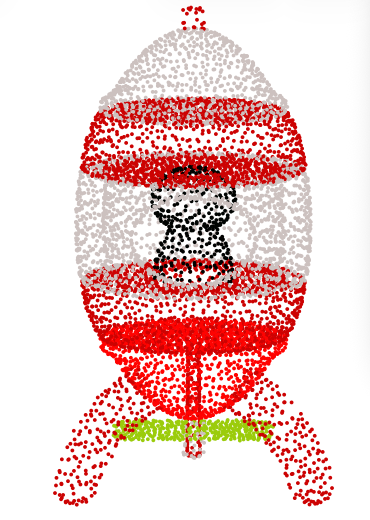}{trim=0 0 20 0,clip}}{
  \pcview{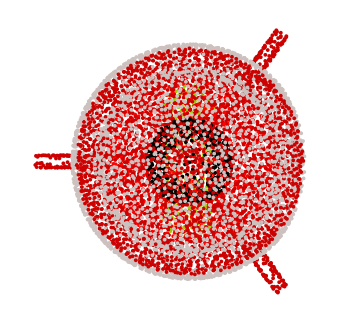}{}}{
  Please describe how important it is to a country.}{
  A small red and white rocket-shaped toy with a handle.}{
  The object depicted is the national flag of Libya, which is a symbol of national identity and sovereignty.}{
  A rocket symbolizes a nation's technological prowess, military strength, and scientific advancement. It enables space exploration...}

\vspace{-0.4em}
\caption{\textbf{Qualitative examples on point-cloud benchmarks.}
We visualize each point cloud using three rendered views (front/side/top) for readability.
The \textbf{baseline} feeds these renders to the frozen Qwen3-Omni backbone, whereas \textbf{MPM} consumes point-cloud tokens directly (the renders are shown for visualization and are not required by MPM).
(a,b) are from ModelNet40 classification; (c,d) are from 3D MM-VET open-ended QA.}
\label{fig:appendix_point_qual}
\vspace{-0.6em}
\end{figure*}

\subsection{More Experiments}
\label{sec:more_experiments}

\noindent\textbf{Controlled comparison under the same backbone.}
To further isolate the effect of the proposed pairwise latent alignment and recomposition design, we conduct a controlled comparison using the same Qwen3-Omni-30B-A3B backbone.
As shown in \cref{tab:qwen3_omni_backbone}, MPM consistently outperforms the PointLLM-style baseline under the unified backbone setting, suggesting that the gains are not solely due to the choice of the LLM backbone.

\begin{table}[t]
  \centering
  \caption{\textbf{Controlled comparison under the same Qwen3-Omni-30B-A3B backbone.}}
  \label{tab:qwen3_omni_backbone}
  \small
  \setlength{\tabcolsep}{6pt}
  \renewcommand{\arraystretch}{1.0}
  \begin{tabular}{lcc}
    \toprule
    \textbf{Method} & \textbf{3D MM-VET} & \textbf{ModelNet40} \\
    \midrule
    PointLLM-style baseline & 54.5 & 60.3 \\
    MPM                    & \textbf{57.6} & \textbf{70.5} \\
    \bottomrule
  \end{tabular}
  \vspace{-0.3cm}
\end{table}

\vspace{0.2cm}
\noindent\textbf{Extension to infrared modality.}
To examine whether MPM can generalize beyond point clouds and tactile sensing, we further extend it to infrared imagery.
We evaluate on the Infrared Template Benchmark, which includes aerial counting, pedestrian counting, recognition, and safety-related tasks.
As shown in \cref{tab:infrared_template}, MPM achieves the best results on aerial counting and pedestrian counting, and remains competitive on recognition and safety, demonstrating its flexibility for incorporating additional modalities.

\begin{table}[t]
  \centering
  \caption{\textbf{Evaluation on the Infrared Template Benchmark.}}
  \label{tab:infrared_template}
  \small
  \setlength{\tabcolsep}{4pt}
  \renewcommand{\arraystretch}{1.0}
  \resizebox{\linewidth}{!}{
    \begin{tabular}{lcccc}
      \toprule
      \textbf{Model} & \textbf{Aerial Count} & \textbf{Pedestrian Counting} & \textbf{Recognition} & \textbf{Safety} \\
      \midrule
      ImageBindLLM-7B   & 18.87 & 50.89 & 30.32 & 77.38 \\
      PandaGPT-7B       & 16.19 & 16.87 & 39.82 & 66.97 \\
      Infrared-LLaVA-7B & 35.63 & 85.50 & \textbf{95.85} & \textbf{94.54} \\
      MPM               & \textbf{45.38} & \textbf{90.36} & \underline{86.90} & \underline{93.06} \\
      \bottomrule
    \end{tabular}
  }
  \vspace{-0.3cm}
\end{table}

\vspace{0.2cm}
\noindent\textbf{Sensitivity analysis.}
We further study the sensitivity of MPM to the shared-mask size and the reconstruction loss weight $\lambda$ on TVL Stage~I.
As shown in \cref{tab:mask_lambda_ablation_tvl_stage1}, MPM maintains stable performance under different mask sizes and loss weights, indicating that the method is not overly sensitive to these hyperparameters.

\begin{table}[t]
  \centering
  \caption{\textbf{Sensitivity analysis on TVL Stage~I.}}
  \label{tab:mask_lambda_ablation_tvl_stage1}
  \small
  \setlength{\tabcolsep}{3pt}
  \renewcommand{\arraystretch}{1.0}

  \begin{minipage}[t]{0.49\linewidth}
    \centering
    \begin{tabular}{ccc}
      \toprule
      \textbf{3-shared} & \textbf{2-shared} & \textbf{Score} \\
      \midrule
      400 & 112 & 4.84 \\
      800 & 224 & 4.65 \\
      200 & 56  & 4.71 \\
      \bottomrule
    \end{tabular}

    \vspace{2pt}
    \footnotesize{(a) Mask size ablation.}
  \end{minipage}
  \hfill
  \begin{minipage}[t]{0.45\linewidth}
    \centering
    \begin{tabular}{cc}
      \toprule
      $\boldsymbol{\lambda}$ & \textbf{Score} \\
      \midrule
      0.1 & 4.84 \\
      0.3 & 4.68 \\
      0.5 & 4.61 \\
      \bottomrule
    \end{tabular}

    \vspace{2pt}
    \footnotesize{(b) Loss-weight ablation.}
  \end{minipage}
  \vspace{-0.3cm}
\end{table}

\vspace{0.2cm}
\noindent\textbf{Effect of asymmetric alignment.}
We also evaluate the role of asymmetric alignment in Stage~I.
As shown in \cref{tab:asym_ablation}, removing asymmetric alignment leads to a clear performance drop, suggesting that this component helps better align modality-specific shared codes under pairwise-only supervision.

\begin{table}[t]
  \centering
  \caption{\textbf{Effect of asymmetric alignment on TVL Stage~I.}}
  \label{tab:asym_ablation}
  \small
  \setlength{\tabcolsep}{6pt}
  \renewcommand{\arraystretch}{1.0}
  \begin{tabular}{lc}
    \toprule
    \textbf{Variant} & \textbf{Score} \\
    \midrule
    Full alignment        & 4.84 \\
    Asymmetric alignment & 4.52 \\
    \bottomrule
  \end{tabular}
  \vspace{-0.3cm}
\end{table}

\vspace{0.2cm}
\noindent\textbf{Stability of GPT-based evaluation.}
Since tactile evaluation relies on GPT-based scoring, we additionally measure the standard deviation across repeated evaluations.
The deviations are small on all three tactile subsets: $0.109$ on SSVTP, $0.037$ on HCT, and $0.045$ on TVL, indicating that the reported GPT-based scores are stable.

\subsection{Qualitative Results}
\label{sec:appendix_b_qualitative}

\paragraph{Discussion.}
Projecting 3D point clouds into a small set of rendered views inevitably discards information, especially fine-grained geometry and global spatial structure.
Consequently, directly applying an image-based MLLM to three-view renders can be brittle: the model may overfit to a single viewpoint and fail to integrate multi-view evidence.
For instance, in \cref{fig:appendix_point_qual}(b), Qwen3-Omni predicts \texttt{bathlib} by relying primarily on the frontal appearance and under-utilizing the other views, which contain discriminative cues for recognizing a \texttt{car}.
In \cref{fig:appendix_point_qual}(d), it further confuses a 3D rocket-like object with a 2D national flag, reflecting limited capability to infer 3D structure from sparse viewpoints.
In contrast, MPM operates on point-cloud-native representations and is more reliable on recognition-style tasks (\eg, (a,b)).
Nevertheless, MPM still exhibits imperfect instruction following on open-ended questions (\eg, (c,d)), which we attribute to training with caption-style supervision only.
A natural next step is to incorporate instruction-following data to improve generalization to broader question answering settings.

\section{More Ablation studies}


Table~\ref{tab:tactile_ablation} reports ablations on tactile understanding across SSVTP, HCT, and TVL under the same evaluation protocol (GPT-4 based scoring on a 1--10 scale). In all settings, the Qwen3-Omni backbone and the tactile encoder are kept frozen, and only modality-side modules are optimized. The full model achieves 6.24/5.09/5.22 on SSVTP/HCT/TVL, improving over the frozen Qwen3-Omni baseline (5.30/4.28/4.40) by 0.94/0.81/0.82 points, which confirms that the proposed modality-side training can inject tactile capability without updating the backbone.

Removing Stage~I and training only the Stage~II interface (recomposition-only) yields 5.56/5.00/5.06. This corresponds to drops of 0.68/0.09/0.16 compared with the full model, while still outperforming the frozen backbone by 0.26/0.72/0.66. The result indicates that paired tactile supervision can directly train a tactile-to-backbone conditioning interface that already provides a meaningful signal. However, the consistent gap to the full model, particularly on SSVTP, suggests that learning a structured latent space in Stage~I provides additional regularization and improves robustness for more fine-grained tactile attribute understanding.

Removing Stage~II while keeping Stage~I (Stage~I only) leads to 5.99/4.70/4.84, with drops of 0.25/0.39/0.38 relative to the full model. This variant remains above the frozen backbone by 0.69/0.42/0.44, implying that Stage~I can learn a useful aligned representation from paired tactile supervision. Nevertheless, the larger degradation on HCT and TVL supports the role of Stage~II as a compatibility bridge that maps the learned latent code into the backbone’s input embedding space, allowing the frozen decoder to contribute its language generation prior during tactile description.

The Stage~I objectives play distinct roles. When the reconstruction loss is removed in Stage~I, performance collapses to 2.98/3.94/3.08, far below the full model and also below the frozen backbone on all splits. This behavior indicates that reconstruction is essential for enforcing an information-preserving latent code and preventing degenerate solutions under limited paired tactile supervision. In contrast, removing the contrastive alignment loss in Stage~I produces 5.96/4.79/4.93, which is consistently worse than the full model by 0.28/0.30/0.29 but still stronger than the frozen backbone by 0.66/0.51/0.53. This suggests that contrastive alignment provides a complementary cross-modal semantic signal, while reconstruction is the dominant stabilizer that maintains latent fidelity; both contribute to the best overall tactile performance.

Overall, the ablations demonstrate that Stage~I and Stage~II are complementary and that the reconstruction term in Stage~I is critical for learning a stable tactile representation. Stage~I improves the learnability and robustness of tactile semantics under pairwise supervision, while Stage~II is necessary to fully exploit the frozen MLLM decoder for language-grounded tactile description, yielding the strongest results across SSVTP, HCT, and TVL.

\begin{table}[t]
\centering
\caption{\textbf{Ablation study on our evaluation suite.}}
\vspace{-0.2cm}
\label{tab:tactile_ablation}
\setlength{\tabcolsep}{7pt}
\renewcommand{\arraystretch}{1.15}
\begin{tabular}{lccc}
\hline
Method & SSVTP & HCT & TVL \\
\hline

Full                   &  \textbf{6.24}  & \textbf{5.09}   & \textbf{5.22} \\

w/o Stage~I (recomposition-only)          & 5.56 & 5.00 & 5.06  \\

w/o Stage~II (Stage~I only)                & 5.99 & 4.70 & 4.84 \\

w/o reconstruction $\mathcal{L}_{\mathrm{rec}}$ in Stage~I   & 2.98 & 3.94 & 3.08 \\

w/o contrastive $\mathcal{L}_{\mathrm{align}}$ in Stage~I                   &  5.96 &4.79 & 4.93    \\
Qwen3-Omni             & 5.30 & 4.28 & 4.40 \\

\hline
\end{tabular}
\vspace{-0.4cm}
\end{table}


\section{Prompt Design}
\label{sec:appendix_b_prompts}

We follow the native \emph{Qwen3-Omni} chat formatting with explicit \texttt{system} and \texttt{user} messages.
As shown in Figure~\ref{fig:appendix_prompts_point}, for the point-cloud modality, the object is provided through a dedicated text section
\texttt{3D\_POINT\_CLOUD\_EMBEDDING} placed inside the user message.
Tokens inside this section are \emph{placeholders}: their continuous embeddings are injected at training/inference time to carry semantic information of the input point cloud.
Below we summarize the exact prompts used for (i) training and (ii) ModelNet40 evaluation; additional prompt variants are available in the released code.

\lstdefinestyle{mpmprompt}{
  basicstyle=\ttfamily\scriptsize,
  columns=fullflexible,
  breaklines=true,
  breakatwhitespace=false,
  frame=single,
  framerule=0.35pt,
  rulecolor=\color{black!25},
  backgroundcolor=\color{black!2},
  xleftmargin=0.8em,
  xrightmargin=0.8em,
  aboveskip=0.45em,
  belowskip=0.25em,
  keepspaces=true,
  showstringspaces=false,
  tabsize=2
}

\begin{figure*}[t]
\centering
\vspace{-0.2em}

\begin{minipage}[t]{0.98\linewidth}
\raggedright
{\small\textbf{Chat wrapper (Qwen3-Omni template used by Ms-Swift).}}
\begin{lstlisting}[style=mpmprompt]
<|im_start|>system
{SYSTEM_PROMPT}
<|im_end|>
<|im_start|>user
{USER_QUERY}   # a.k.a. QUERY / user_text
<|im_end|>
<|im_start|>assistant
# generation starts here (add_generation_prompt=True)
\end{lstlisting}
\end{minipage}

\vspace{0.35em}

\begin{minipage}[t]{0.59\linewidth}
\raggedright
{\small\textbf{(a) Training system prompt (\texttt{DEFAULT\_SYSTEM\_PROMPT}).}}
\begin{lstlisting}[style=mpmprompt]
You are Qwen, a virtual human developed by the Qwen Team, Alibaba Group, capable of understanding text inputs and generating helpful responses.

Task setting:
- You will answer questions about an object represented by a 3D point cloud.
- In some requests, the user message will contain a section named '3D_POINT_CLOUD_EMBEDDING'. The tokens in that section are placeholders whose embeddings are injected at inference/training time to carry semantic information about the 3D object.

Instructions:
- Use the '3D_POINT_CLOUD_EMBEDDING' section as object context to answer the question.
- If the embedding section is absent, answer based only on the text question and be explicit about uncertainty.
- Output only the final answer text (no role labels such as user/assistant, no extra dialogue markers).
\end{lstlisting}
\end{minipage}
\hfill
\begin{minipage}[t]{0.39\linewidth}
\raggedright
{\small\textbf{(b) Training user message (\texttt{\{USER\_QUERY\}} skeleton).}}
\begin{lstlisting}[style=mpmprompt]
{TASK_INSTRUCTION / QUESTION}

3D_POINT_CLOUD_EMBEDDING:
<pc_tok_1> <pc_tok_2> ... <pc_tok_L>
\end{lstlisting}
{\scriptsize
\noindent\emph{Note:} \texttt{<pc\_tok\_i>} are placeholder tokens whose embeddings are injected from the point-cloud encoder + mapper at runtime.}
\end{minipage}

\vspace{0.35em}

\begin{minipage}[t]{0.59\linewidth}
\raggedright
{\small\textbf{(c) ModelNet40 system prompt (\texttt{SYSTEM\_PROMPT\_CLS}).}}
\begin{lstlisting}[style=mpmprompt]
You are Qwen, a virtual human developed by the Qwen Team, Alibaba Group.

Task setting:
- You will classify an object represented by a 3D point cloud into one of the ModelNet40 categories.
- The user message may contain a section named '3D_POINT_CLOUD_EMBEDDING'. The tokens inside that section are placeholders whose embeddings are injected at inference time to carry semantic information about the 3D object.

Instructions:
- Use the embedding section as the ONLY object context.
- Output exactly ONE category label from the provided list.
- Output only the label string, no extra words, no punctuation, no JSON.
\end{lstlisting}
\end{minipage}
\hfill
\begin{minipage}[t]{0.39\linewidth}
\raggedright
{\small\textbf{(d) ModelNet40 user message (\texttt{\{user\_text\}} skeleton).}}
\begin{lstlisting}[style=mpmprompt]
Classify the object into exactly one of the following ModelNet40 categories: {40 labels}.

3D_POINT_CLOUD_EMBEDDING:
<pc_tok_1> <pc_tok_2> ... <pc_tok_L>
\end{lstlisting}
{\scriptsize
\noindent\emph{Output constraint:} the model must emit exactly one label token string (no punctuation or extra words).}
\end{minipage}

\vspace{-0.5em}
\caption{\textbf{Prompt templates for point-cloud training and evaluation.}
We use the Qwen3-Omni chat wrapper from Ms-Swift, consisting of a \texttt{system} message and a \texttt{user} message, followed by an \texttt{assistant} generation prefix.
Point-cloud information is injected via placeholder tokens inside the \texttt{3D\_POINT\_CLOUD\_EMBEDDING} block.
For open-ended tasks (training and 3D MM-VET), we adopt \texttt{DEFAULT\_SYSTEM\_PROMPT};
for ModelNet40, we use a stricter classification-oriented system prompt that enforces one-label outputs.}
\label{fig:appendix_prompts_point}
\vspace{-0.6em}
\end{figure*}

\paragraph{Implementation note.}
In code, we construct a two-turn conversation
(\texttt{system}: prompt, \texttt{user}: query) and call
\texttt{processor.apply\_chat\_template(...)}
to produce the final token sequence consumed by the frozen Qwen3-Omni backbone.
Full prompt templates and all task-specific variants are provided in our released implementation.

\end{document}